\newcommand{\cmark}{\ding{51}}%
\providecommand{\algorithmname}{Algorithm}
\newcommand\footnoteref[1]{\protected@xdef\@thefnmark{\ref{#1}}\@footnotemark}
\renewcommand{\algorithmiccomment}[1]{\bgroup\hfill\scriptsize//~#1\egroup}
\def\tr{^{\rm T}}
\def\zero{\hbox{\bf 0}}
\def\bff{{\mbox{\boldmath $f$}}}
\def\bfg{{\mbox{\boldmath $g$}}}
\def\bfx{{\mbox{\boldmath $x$}}}
\def\bfy{{\mbox{\boldmath $y$}}}
\def\bfz{{\mbox{\boldmath $z$}}}
\def\bfX{{\mbox{\boldmath $X$}}}
\def\bfY{{\mbox{\boldmath $Y$}}}
\def\bfPhi{{\mbox{\boldmath $\Phi$}}}
\newtheorem{theorem}{\bf Theorem}
\newtheorem{remark}{Remark}
\begin{document}
\bstctlcite{IEEEexample:BSTcontrol}

\title{\LARGE \bf
{An Energy-based Approach to Ensure the Stability of Learned Dynamical Systems}}

\author{Matteo Saveriano%
\thanks{The author is with Intelligent and Interactive Systems and Digital Science Center (DiSC), University of Innsbruck, Innsbruck, Austria {\tt matteo.saveriano@uibk.ac.at}.}
\thanks{This research has received funding from the European Union's Horizon 2020 research and innovation programme (grant agreement no. 731761, IMAGINE) and from the Austrian Research Foundation (Euregio IPN 86-N30, OLIVER).}
}


\maketitle


\begin{abstract}
Non-linear dynamical systems represent a compact, flexible, and robust tool for reactive motion generation. The effectiveness of dynamical systems relies on their ability to accurately represent stable motions. Several approaches have been proposed to learn stable and accurate motions from demonstration. Some approaches work by separating accuracy and stability into two learning problems, which increases the number of open parameters and the overall training time. Alternative solutions exploit single-step learning but restrict the applicability to one regression technique. This paper presents a single-step approach to learn stable and accurate motions that work with any regression technique. The approach makes energy considerations on the learned dynamics to stabilize the system at run-time while introducing small deviations from the demonstrated motion. Since the initial value of the energy injected into the system affects the reproduction accuracy, it is estimated from training data using an efficient procedure. Experiments on a real robot and a comparison on a public benchmark shows the effectiveness of the proposed approach.
\end{abstract}


\IEEEpeerreviewmaketitle

\section{Introduction}\label{sec:intro}
Service robots need to perform a variety of tasks in different domains, which requires flexible and fast motion planners. Stable dynamical systems (DS) arise as a promising approach for real-time motion generation and gained increasing attention in the community \cite{DMP, Pervez18, SEDS, Clf, tau-SEDS, Perrin16, Blocher17, duan2017fast, NeuralLearn2}. DS have several interesting properties that make them well-suited for motion generation. Indeed, DS can plan stable motions from any starting point to any target position in the robot's workspace. DS can on-line replan the robot's trajectory to cope with changes in the target position or unexpected obstacles \cite{DS_avoidance, Saveriano13, Saveriano14, Saveriano17}. DS motions can be incrementally updated as novel task demonstrations are provided \cite{Kronander15, Saveriano18} or confined to constrained domains \cite{Saveriano19_Learning} without compromising the convergence to the goal. 
Finally, DS have been used to encode primitive robotic skills or movement primitives \cite{Schaal99}, which can be properly scheduled to execute complex tasks \cite{Caccavale17, Caccavale18}. When the DS is learned from demonstration, we are often interested in accurately reproducing the demonstrated trajectories. This is a key difference between DS-based and learning-based motion planners (like  \cite{pfeiffer2017from}), where the objective is to learn and generalize collision-free, goal-reaching paths given (imperfect) demonstrations. 
\begin{table*}[t!]
     \centering
     \caption{Comparison of different approaches for stable motion generation with dynamical systems.}
     \vspace{-0.2cm}
     \label{tab:comparison_appraoches}
     {\renewcommand\arraystretch{1.3} 
 	\begin{tabular}{ c||c|c|c|c|c|c }
 	 & Global stability & Multiple motions  & Any regression technique & Autonomous DS & Single-step learning & Training time\\
 	 \hline
     \hline 
     ESDS (Ours) & \cmark & \cmark & \cmark &  -  & \cmark & low \\
     SEDS \cite{SEDS} & \cmark & \cmark  & - &  \cmark  & - & high\\
     DMP \cite{DMP} & \cmark & - & - &  -  & \cmark & low\\
     CLF-DM \cite{Clf} & \cmark & \cmark  & \cmark &  \cmark  & - & medium\\
     $\tau$-SEDS \cite{tau-SEDS} & \cmark & \cmark  & \cmark & \cmark & - & high\\
     FDM \cite{Perrin16} & \cmark & - & - &  \cmark  & \cmark & low\\
     FSM-DS \cite{duan2017fast} & \cmark & \cmark & - & \cmark   & - & medium\\
     C-GMR \cite{Blocher17} & \cmark & \cmark & - &  -  &  \cmark & low\\
 \end{tabular}
 }
 \vspace{-0.5cm}
 \end{table*}

Dynamic movement primitives (DMPs) \cite{DMP} are certainly the most used approach for DS-based motion generation. A DMP is the superposition of a linear system and a non-linear term learned from a single demonstration. A time-dependent clock signal suppresses the non-linear term to enforce the convergence towards a given target. The DMP framework has been extended in several ways. For instance \cite{Pervez18, TP-DMP} introduce task-dependent parameters to customize the robot motion, while \cite{Saveriano19_Merging} investigates the possibility to merge multiple DMPs to plan more complex pose trajectories. A problem of DMPs is that the clock signal may introduce deviations from the demonstration. Although time rescaling techniques have been proposed \cite{DMP}, DMPs still miss generalization capabilities outside the demonstration area \cite{tau-SEDS}.

An alternative approach is to encode the motion into a state-dependent DS. This idea is exploited by the stable estimator of dynamical systems (SEDS) in \cite{SEDS}, where the parameters of a Gaussian mixture regression (GMR) are constrained to ensure global stability. However, SEDS exploits quadratic stability constraints that limit the reproduction accuracy. Researchers in the field have realized that, in some cases, accuracy and stability are conflicting objectives to achieve. This is known as the \textit{accuracy vs stability dilemma} \cite{tau-SEDS} and several approaches have been proposed to improve the accuracy while preserving the stability.

The work in \cite{Blocher17, duan2017fast} focus on a single regression technique and try to maximize the accuracy while keeping the stability. Similarly, \cite{Perrin16} proposes to transform a linear DS using a learned diffeomorphic mapping\footnote{A diffeomorphism is a bijective, continuous, and continuously differentiable mapping with a continuous and continuously differentiable inverse.} that accurately represents a single demonstration. These approaches have the advantage of handling the accuracy vs stability dilemma within a single-step learning procedure. However, it is clear that focusing on a single regression technique limits the applicability of the approach because it is not known a prior which regression technique works better for a given application. 

Alternative approaches apply a \textit{divide et impera} procedure and separate accuracy and stability into two different (two-steps) learning processes. The work in \cite{Clf} learns a flexible Lyapunov function which reduces the contradictions between stability constraints and training data. The parameters of this Lyapunov function are determined by solving a constrained optimization problem. The approach in \cite{tau-SEDS} assumes that there exists a diffeomorphed space where the training data are accurately represented by a quadratic Lyapunov function. The authors propose to learn a diffeomorphic mapping from training data, project the data into the diffeomorphed space, and then use an approach like SEDS to fit a stable DS. Although the approaches in \cite{Clf, tau-SEDS} result in accurate motions, the two-step learning requires to fit both a Lyapunov function (or a diffeomorphism) and a dynamical system from training data. Hence, these approaches introduce further tunable parameters and increase the overall training time. The main features of the presented approaches are listed in Tab.~\ref{tab:comparison_appraoches}.  

In this paper, we propose the Energy-based Stabilizer of Dynamical Systems (ESDS), a single-step learning framework that copes with the stability vs accuracy dilemma. ESDS exploits a particular form of DS consisting of a superposition of a conservative vector field (a linear dynamics) and a non-linear, in general dissipative one. The dissipative vector field is learned from demonstrations and retrieved at run-time using any regression technique. Therefore, ESDS is not targeted to a single regression technique like \cite{Perrin16, Blocher17, duan2017fast}. To stabilize the learned dynamics, ESDS proceeds with energy considerations in a way that resembles the energy tank-based controllers \cite{Franken11, Kronander16, Selvaggio19}, but with the objective to ensure the stability of the DS rather than its passivity. To this end, we augment a quadratic Lyapunov function with an additive term that plays the role of a virtual energy tank. With this formulation, the non-linear field is followed until there is energy in the tank and smoothly vanishes if the energy is depleted. Being the initial value of the energy of importance for an accurate reproduction, we propose an approach to determine the initial energy from training data. An experimental comparison shows that ESDS reaches similar or better accuracy than two-step approaches in a time that is comparable with single-step ones, which makes ESDS a promising approach for motion generation with stable DS.


\section{Proposed Approach}\label{sec:approach}
\subsection{Problem definition and background material}\label{subsec:problem}
We represent a robotic skill as a mapping between the robot position (in joint or Cartesian space) $\bfx(t) \in \mathbb{R}^n$ and its velocity $\dot{\bfx}(t) \in \mathbb{R}^n$. Therefore, the skill can be represented as a first-order dynamical system (DS) in the form
\begin{equation}
	\dot{\bfx} = \bfg(\bfx),
	\label{eq:ds_syst}
\end{equation}
where the time dependencies have been omitted to ease the notation. In general, the mapping occurs through the continuous and continuously differentiable non-linear function $\bfg : \mathbb{R}^{n} \rightarrow \mathbb{R}^{n}$. A solution $\bfPhi(\bfx_0,t) \in \mathbb{R}^{n}$ of \eqref{eq:ds_syst} represents a trajectory. Different initial conditions $\bfx_0$ generate different trajectories. An equilibrium point is a point where the velocity vanishes, i.e. $\hat{\bfx} : \bfg(\hat{\bfx}) = \mathbf{0} \in \mathbb{R}^{n}$. $\hat{\bfx} \in S \subset \mathbb{R}^{n}$ is locally asymptotically stable (LAS) if $\lim_{t\rightarrow+\infty} \bfPhi(\bfx_0,t) =\hat{\bfx}, \forall \bfx_0 \in S$. If $S = \mathbb{R}^{n}$, $\hat{\bfx}$ is globally asymptotically stable (GAS) and it is the only equilibrium of the DS.

A sufficient condition for $\hat{\bfx}$ to be GAS is that there exists a scalar, continuously differentiable function of the state variables $\mathcal{V}(\bfx) \in \mathbb{R}$ satisfying \cite{Slotine91}:
\begin{subequations}
	\begin{align}
		&\mathcal{V}(\bfx) \geq 0, ~\forall \bfx \in \mathbb{R}^{n} ~\text{and}~ \mathcal{V}(\bfx) = 0 \Longleftrightarrow \bfx = \hat{\bfx} \label{eq:lyap_stab_cond1}\\
		&\dot{\mathcal{V}}(\bfx) \leq 0, ~\forall \bfx \in \mathbb{R}^{n} ~\text{and}~  \dot{\mathcal{V}}(\bfx) = 0 \Longleftrightarrow \bfx = \hat{\bfx} \label{eq:lyap_stab_cond2}\\
		&\mathcal{V}(\bfx) \rightarrow \infty ~\text{as}~ \Vert \bfx \Vert  \rightarrow \infty ~\text{(radially unbounded)} \label{eq:lyap_stab_cond3}
	\end{align}	
\end{subequations} 
Note that, if condition (\ref{eq:lyap_stab_cond3}) is not satisfied, the equilibrium point is LAS. A function that satisfies conditions \eqref{eq:lyap_stab_cond1}--\eqref{eq:lyap_stab_cond2} is called a Lyapunov function.

Our goal is to represent a set of training data (position/velocity pairs) as a first-order DS that satisfies the following requirements:
\begin{enumerate}
	\item The point $\hat{\bfx}$ is a GAS---and therefore unique---equilibrium of the DS.
	\item The trajectories of the DS ``accurately'' represent the training data.
\end{enumerate}
An approach that satisfies 1) and 2) is presented as follows. In the rest of the paper, we consider $\hat{\bfx}=\zero$ without loss of generality. A different goal $\hat{\bfx}'\neq \zero$ can be reached by applying the constant state translation $\bfy = \bfx - \hat{\bfx}'$.

\subsection{Dynamical system definition}\label{subsec:learmed_ds}
In this work, we use a DS different from \eqref{eq:ds_syst}. More specifically, we consider the non-linear function $\bfg(\bfx)$ as the superposition of two vector fields: \textit{i)} a stable, linear vector field $-\bfx$ and \textit{ii)} a non-linear, in general non-conservative, vector field $\bff(\bfx)$. The DS used in this work is
\begin{equation}
\dot{\bfx} = \bfg(\bfx) = -\bfx + \kappa(\Vert \bfx \Vert) \bff(\bfx) 
\label{eq:paramet_ds}
\end{equation}
where $\kappa(\cdot)$ is a smooth and positive semidefinite function that vanishes at the equilibrium ($\kappa(\Vert\hat{\bfx}\Vert)=0$). The function $\kappa(\cdot)$ is used to ensure that the DS in \eqref{eq:paramet_ds} has an equilibrium point at $\hat{\bfx}=\zero$. We use $\kappa(\Vert\bfx\Vert) = 1 - e^{-a\Vert \bfx \Vert^2}$ in this work. 

At this point, we have no guarantee that $\hat{\bfx}$ is GAS, since the DS can fall in a spurious equilibrium, follow a periodic orbit, or even diverge. The stability of \eqref{eq:paramet_ds} is analyzed in Sec.~\ref{subsec:stability}, while the rest of this section explains how to encode a set of demonstrations in a DS defined as in \eqref{eq:paramet_ds}.    
Consider that a set of demonstrations is given as $\{ \bfx_{d}^{t}, ~\dot{\bfx}_{d}^{t}\}_{t=1,d=1}^{T,D}$, where $\bfx_{d}^{t} \in \mathbb{R}^{n}$ is the desired robot position (in joint or Cartesian space) at time $t$ and demonstration $d$, $\dot{\bfx}_{d}^{t}\in \mathbb{R}^n$ is the desired robot velocity, $T$ is the number of samples in each demonstration, and $D$ the number of demonstrations. The demonstrations are firstly converted into input/output pairs by rewriting the DS in \eqref{eq:paramet_ds} as
\begin{equation}
\label{eq:train_ds}
\kappa^{-1}(\Vert\bfx\Vert)\left(\dot{\bfx} + \bfx\right) = \bff(\bfx), 
\end{equation}
where
\begin{equation}
\kappa^{-1}(\Vert\bfx\Vert) = \begin{cases}
\frac{1}{\kappa(\Vert\bfx\Vert)} \quad  \Vert\bfx\Vert \neq 0 \\
			1 ~~\qquad \text{otherwise}
		   \end{cases}.
\label{eq:kappa_inverse}
\end{equation}
Equation \eqref{eq:train_ds} shows that the unknown term $\bff(\bfx)$ is a non-linear mapping between $\bfx$ and $\kappa^{-1}(\Vert\bfx\Vert)\left(\dot{\bfx} + \bfx\right)$. Therefore, we use the demonstrated states $\bfX = \{\bfx_{d}^{t}\}_{t=1,d=1}^{T,D}$ as input and $\bfY = \left\lbrace \kappa^{-1}(\Vert\bfx_{d}^{t}\Vert)\left(\dot{\bfx}_{d}^{t} + \bfx_{d}^{t}\right)\right\rbrace_{t=1,d=1}^{T,D}$ as observations of $\bff(\bfx)$ in a supervised learning process. In other words, we add a non-linear, state-dependent displacement $\kappa( \Vert\bfx \Vert) \bff(\bfx)$ to the linear and convergent dynamics $-\bfx$ to closely reproduce the demonstrated velocities. 
Given the input/output pairs $\bfX$/$\bfY$, any regression technique can be used to learn $\bff(\bfx)$ and retrieve a smooth velocity for each state. Note that the definition of $\kappa^{-1}$ in \eqref{eq:kappa_inverse} does not generate discontinuities because $\dot{\bfx}_{d} + \bfx_{d}\rightarrow \zero$ for $\bfx_{d}\rightarrow \zero$.

\subsection{Energy tank-based Lyapunov function}\label{subsec:energy_lyapunov}
As mentioned in Sec.~\ref{subsec:learmed_ds}, the stability of \eqref{eq:paramet_ds} is not guaranteed yet. To this end, we aim at exploiting the Lyapunov stability results summarized in Sec.~\ref{subsec:problem}. The easiest way to define a function that satisfies conditions \eqref{eq:lyap_stab_cond1} and \eqref{eq:lyap_stab_cond3}, i.e. a candidate Lyapunov function, is to consider the quadratic potential $\mathcal{V}(\bfx) = 0.5\Vert\bfx\Vert^2$.
In order to fulfill also condition \eqref{eq:lyap_stab_cond2}, one has to show that the time derivative $\dot{\mathcal{V}}(\bfx) < 0 ~\forall \bfx \neq \zero$, i.e. one has to analyze the sign of
\begin{equation}
\dot{\mathcal{V}}(\bfx) =  \frac{\partial\mathcal{V}}{\partial\bfx}\dot{\bfx} = \nabla \mathcal{V} \dot{\bfx} = -\bfx\tr\bfx + \kappa(\Vert\bfx\Vert)\bfx\tr\bff(\bfx),
\label{eq:stab_quadratic_lyap}
\end{equation}
where the definition of $\dot{\bfx}$ from \eqref{eq:paramet_ds} has been used. By inspecting \eqref{eq:stab_quadratic_lyap}, it is straightforward to verify that $\dot{\mathcal{V}}(\hat{\bfx})=0$, and that the term $-\bfx\tr\bfx$ is always negative. The problem is that the sign of $\kappa(\Vert\bfx\Vert)\bfx\tr\bff(\bfx)$ cannot be determined in advance even if we know that $\kappa(\Vert\bfx\Vert) \geq 0$. Therefore, the stability condition \eqref{eq:lyap_stab_cond2} is not automatically verified. Several approaches have been proposed to ensure that $\bfx\tr\bff(\bfx) \leq 0$ (see, for example, \cite{SEDS, tau-SEDS}). However, it is known that the stability conditions imposed by a quadratic potential may prevent accurate motion learning \cite{Blocher17, Clf}. 

Instead of imposing quadratic stability constraints on the DS, we propose an alternative solution inspired by energy tank-based controllers \cite{Franken11, Kronander16,Selvaggio19}. The idea is to associate to the DS a virtual energy tank and reuse the energy dissipated when $\kappa(\Vert\bfx\Vert)\bfx\tr\bff(\bfx)\leq 0$. Stable motions generated by $\kappa(\Vert\bfx\Vert)\bfx\tr\bff(\bfx)\leq 0$ increase the level of the tank while possibly unstable motions ($\kappa(\Vert\bfx\Vert)\bfx\tr\bff(\bfx)> 0$) reduce the level of the tank. In this formulation, possibly unstable motions are executed unless the storage is depleted, preserving the overall stability of the system. More formally, we consider the augmented Lyapunov candidate \begin{equation}
\mathcal{V}_s(\bfx) =  \mathcal{V}(\bfx) + s = \frac{1}{2}\Vert\bfx\Vert^2 + s,
\label{eq:energy_tank_Lyapunov}
\end{equation}
where the function $s$ plays the role of an energy tank and keeps track of the energy dissipated by the DS in previous instants. Recall that a candidate Lyapunov function satisfies conditions \eqref{eq:lyap_stab_cond1} and \eqref{eq:lyap_stab_cond3}, i.e. it is positive definite and radially unbounded. Assuming that $s$ is positive semi-definite and zero at least for $\hat{\bfx} = \zero$, condition \eqref{eq:lyap_stab_cond1} is satisfied. Notice that, even if $s=0$ for some $\bfx' \neq \zero$, $\mathcal{V}_s(\bfx') > 0$. Condition \eqref{eq:lyap_stab_cond3} can be fulfilled by requiring that $s\rightarrow+\infty$ for $\Vert \bfx \Vert \rightarrow+\infty$ or, as in this work, by assuming that $s$ is bounded.

In principle, it is possible to assign to $s$ an arbitrary dynamics that results in a positive semi-definite and bounded $s$ that vanishes at $\hat{\bfx} = \zero$. Let us first define the dynamics
\begin{equation}
\dot{s}' = \alpha(s)\Vert \bfx\Vert^2-\beta(z,s)z.
\label{eq:storage_classic}
\end{equation}
We can now define the dynamics of $s$ as
\begin{equation}
\dot{s} = \begin{cases}
\dot{\kappa}(\Vert \bfx \Vert)\overline{s} \quad  s \geq \kappa( \Vert\bfx\Vert )\overline{s} ~\text{and}~ \dot{s}' > \dot{\kappa}(\Vert \bfx \Vert)\overline{s} < 0 \\
			\dot{s}' ~~ \qquad \text{otherwise}
		   \end{cases},
\label{eq:storage_dynamics}
\end{equation}
where $z = \kappa(\Vert \bfx \Vert)\bfx\tr\bff(\bfx)$, $\kappa(\Vert \bfx \Vert)$ is defined in Sec.~\ref{subsec:learmed_ds}, and $\overline{s}$ is the maximum value of $s$. Indeed, the storage is initialized as $s_0 = \kappa(\Vert \bfx \Vert)\overline{s}$, with $0\leq\kappa(\cdot)\leq 1$. Note that the first condition in~\eqref{eq:storage_dynamics} guarantees that both $s$ and $\mathcal{V}_s(\bfx)$ in \eqref{eq:energy_tank_Lyapunov} vanish at the equilibrium point. The gain $\alpha(s)$ satisfies
\begin{equation}
	\begin{cases}
		0 \leq \alpha(s) < 1 \quad s < \kappa(\Vert \bfx \Vert)\overline{s} \\ 
		\alpha(s) = 0 ~~~\qquad \text{otherwise}
	\end{cases},
	\label{eq:conditions_alpha}
\end{equation}
where $\alpha(s)$ is strictly smaller than $1$ to ensure global stability (see Sec.~\ref{subsec:stability}). The scalar gain $\beta(z,s)$ satisfies
\begin{equation}
	\begin{cases}
    	\beta(z,s) = 0 ~~\qquad s \geq \kappa(\Vert \bfx \Vert)\overline{s} ~\text{and}~ z < 0 \\ 
		\beta(z,s) = 0 ~~\qquad s \leq \underline{s}~\text{and}~ z \geq 0 \\ 
		0 \leq \beta(z,s) \leq 1 \quad \text{otherwise}
	\end{cases}.
	\label{eq:conditions_beta}
\end{equation}
The parameter $\underline{s} \geq 0$ is the minimum value of $s$ and we use $\underline{s}=0$ in this work. The choice of $\alpha(s)$ and $\beta(z,s)$ is made to ensure that $s_0 = s(0) \geq \underline{s} = 0 \Rightarrow s(t) \geq \underline{s} = 0, ~\forall t>0$. In this way, the function $\mathcal{V}_s(\bfx)$ is positive definite. Moreover, the dynamics of $s$ in \eqref{eq:storage_dynamics} guarantees that $0 \leq s \leq \kappa(\Vert \bfx \Vert)\overline{s}$ if $s_0 \leq \kappa(\Vert \bfx \Vert)\overline{s}$. Recalling that $\kappa( 0 ) = 0$, this allows us to conclude that $\mathcal{V}_s(\hat{\bfx})=0$. The function $\mathcal{V}_s(\bfx)$ is also radially unbounded since $s$ is bounded and $\Vert \bfx \Vert$ is unbounded. 


%
%
%
%


The non-linear term $\kappa(\Vert \bfx \Vert)\bff(\bfx)$ of the DS \eqref{eq:paramet_ds} can be followed until there is energy in the storage. Indeed, one can extract energy from the storage ($z>0$) if and only if $s>0$. On the contrary, when  the  storage  is  depleted ($s=0$), the non-linear term may compromise the stability and it has to be suppressed. To this end, we modify the DS in \eqref{eq:paramet_ds} as
\begin{equation}
\dot{\bfx} = -\bfx + \gamma(z,s)\kappa(\Vert \bfx \Vert)\bff(\bfx), 
\label{eq:paramet_ds_gamma}
\end{equation}
where the scalar gain $\gamma(z,s)$ satisfies
\begin{equation}
	\begin{cases}
		\gamma(z,s) = \beta(z,s) \quad z \geq 0 \\ 
		\gamma(z,s) \geq \beta(z,s) \quad \text{otherwise}
	\end{cases}.
	\label{eq:conditions_gamma}
\end{equation}
By inspecting \eqref{eq:paramet_ds_gamma} and \eqref{eq:conditions_gamma}, it is clear that $\gamma(z,s)$ suppresses the term $\kappa(\Vert\bfx\Vert)\bff(\bfx)$ when the storage is depleted. In this case, the DS is driven towards the equilibrium by the linear---and therefore stable---term $-\bfx$. The functions $\alpha(s)$, $\beta(z,s)$, and $\gamma(z,s)$ are defined in Tab.~\ref{tab:smooth_functions}. These piecewise-defined functions are chosen to guarantee smooth switching between different conditions. It is worth noticing that the formulation in~\eqref{eq:conditions_alpha}--\eqref{eq:conditions_gamma} resembles that in \cite{Kronander16}. In \cite{Kronander16}, torque commands for the robot are generated by feeding the robot state in a given DS. The torque command has a non-passive term that is suppressed by a function similar to $\gamma(\cdot)$. Here, we apply a similar approach with the goal of learning an accurate and converging open-loop representation of a demonstrated motion. As a final remark, $s$ introduces a time dependency in the dynamical system \eqref{eq:paramet_ds_gamma}. This DS formulation \eqref{eq:paramet_ds} is then similar to DMPs \cite{DMP}, in the sense that DMPs also use a time-dependent (clock) signal to retrieve asymptotic stability. However, in the DMP formulation, the non-linear term only depends on time and is learned from a single demonstration, while in ESDS the term $\bff(\bfx)$ is state dependent and can be learned from multiple demonstrations. Moreover, the clock signal in the DMP exponentially vanishes with time, while the storage $s$ can be potentially charged during the motion, helping to accurately reproduce the demonstrations.

\begin{table}[t]
    \centering
    \caption{Definition of the functions $\alpha(s)$, $\beta(z,s)$, $\gamma(z,s)$, and $\kappa(\Vert\bfx\Vert)$.} 
    \label{tab:smooth_functions}
    \resizebox{\columnwidth}{!}{%
    {\renewcommand\arraystretch{1.3} 
	\begin{tabular}{ |c| }
	\hline
	$
	h_1(x,\underline{x},\overline{x}) = \begin{cases}
		1 \quad\qquad\qquad\qquad \qquad \qquad \qquad x \geq \overline{x} \\ 
		0 \quad\qquad\qquad\qquad \qquad \qquad \qquad x \leq \underline{x} \\ 
		0.5(1+\sin\left(\pi\left(\frac{x-\underline{x}}{\overline{x}-\underline{x}} - 0.5\right)\right) ~\text{otherwise}
	\end{cases}
	$\\
	\hline
	\hline
	$h_2(x,\underline{x},\overline{x}) = 1 - h_1(x,\underline{x},\overline{x})$\\
	\hline
	\hline
	$\alpha(s) = \min(0.99,h_1(s,0,0.1\kappa(\Vert \bfx \Vert)\overline{s})\cdot h_2(s,0.9\kappa(\Vert \bfx \Vert)\overline{s},\kappa(\Vert \bfx \Vert)\overline{s}))$\\
	\hline
	\hline
	$\beta(z,s) = 1 - h_1(z,-0.01,0)\cdot h_2(s,0,0.1\kappa(\Vert \bfx \Vert)\overline{s})$\\
	$ \qquad \qquad \qquad - h_1(s,0.9\kappa(\Vert \bfx\Vert )\overline{s},\kappa(\Vert \bfx \Vert)\overline{s})\cdot h_2(z,0,0.01)$\\
	\hline
	\hline
	$\gamma(z,s) = 1 - h_1(z,0,0.01)\cdot h_2(s,0,0.1\kappa(\Vert \bfx \Vert)\overline{s})$\\
	\hline
	$\kappa(\Vert\bfx\Vert) = 1 - e^{-0.1\Vert\bfx\Vert^2}$\\
	\hline
\end{tabular}
}}
\end{table}	

 \subsection{Stability analysis}\label{subsec:stability}
The candidate Lyapunov function in~\eqref{eq:energy_tank_Lyapunov} and the DS~\eqref{eq:paramet_ds_gamma} allow us to prove the following stability theorem.
\begin{theorem}
\label{th:Lyap_tank_stabilty}
Assume that the dynamics of the storage function $s$ is defined as in \eqref{eq:storage_dynamics} with $\dot{s}'$ defined as in~\eqref{eq:storage_classic}. Assume also that $\alpha(s)$, $\beta(z,s)$, and $\gamma(z,s)$ satisfy conditions \eqref{eq:conditions_alpha},~\eqref{eq:conditions_beta}, and \eqref{eq:conditions_gamma} respectively, and that $0 \leq s_0 \leq \kappa(\Vert \bfx\Vert )\overline{s}$. Under these assumptions, the dynamical system defined as in \eqref{eq:paramet_ds_gamma} globally asymptotically converges to $\hat{\bfx} = \zero$.
\end{theorem}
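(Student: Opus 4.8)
The plan is to apply the Lyapunov theorem recalled in Sec.~\ref{subsec:problem} to the candidate $\mathcal{V}_s$ of \eqref{eq:energy_tank_Lyapunov}. Conditions \eqref{eq:lyap_stab_cond1} and \eqref{eq:lyap_stab_cond3}---positive definiteness and radial unboundedness---were already argued in Sec.~\ref{subsec:energy_lyapunov} from the invariant bounds $0\leq s\leq\kappa(\Vert\bfx\Vert)\overline{s}$ together with $\kappa(0)=0$, so essentially the whole burden of the proof falls on the decrease condition \eqref{eq:lyap_stab_cond2}: I must show $\dot{\mathcal{V}}_s(\bfx)<0$ for every $\bfx\neq\zero$ and $\dot{\mathcal{V}}_s(\zero)=0$. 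A preliminary (routine) point I would record is that the sign conventions of $\alpha$ and $\beta$ in \eqref{eq:conditions_alpha}--\eqref{eq:conditions_beta} propagate the initialization $0\leq s_0\leq\kappa(\Vert\bfx\Vert)\overline{s}$ for all $t$, so that these bounds may be freely used.

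First I would differentiate $\mathcal{V}_s$ along the trajectories of \eqref{eq:paramet_ds_gamma}. Using $\nabla\mathcal{V}=\bfx\tr$ and the shorthand $z=\kappa(\Vert\bfx\Vert)\bfx\tr\bff(\bfx)$ one gets
\[
\dot{\mathcal{V}}_s = \bfx\tr\dot{\bfx}+\dot{s} = -\Vert\bfx\Vert^2 + \gamma(z,s)\,z + \dot{s}.
\]
The argument then splits along the two branches of the storage dynamics \eqref{eq:storage_dynamics}. For the ``otherwise'' branch, where $\dot{s}=\dot{s}'$, I would substitute \eqref{eq:storage_classic} to obtain $\dot{\mathcal{V}}_s = -(1-\alpha(s))\Vert\bfx\Vert^2+(\gamma(z,s)-\beta(z,s))\,z$. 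The sign is controlled by \eqref{eq:conditions_gamma}: for $z\geq0$ we have $\gamma=\beta$, so the cross term vanishes, whereas for $z<0$ we have $\gamma\geq\beta$, so $(\gamma-\beta)z\leq0$. In both subcases $\dot{\mathcal{V}}_s\leq -(1-\alpha(s))\Vert\bfx\Vert^2$, which is strictly negative for $\bfx\neq\zero$ precisely because $\alpha(s)<1$ is enforced by \eqref{eq:conditions_alpha} (and by the $\min(0.99,\cdot)$ in Tab.~\ref{tab:smooth_functions}), and vanishes only at $\bfx=\zero$.

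The delicate branch---and the step I expect to be the main obstacle---is the first case of \eqref{eq:storage_dynamics}, where the storage is clamped to its upper bound and $\dot{s}=\dot{\kappa}(\Vert\bfx\Vert)\overline{s}$, giving $\dot{\mathcal{V}}_s=-\Vert\bfx\Vert^2+\gamma(z,s)\,z+\dot{\kappa}(\Vert\bfx\Vert)\overline{s}$. Here the cross term $\gamma z$ may be positive, so one cannot conclude termwise. The key identity I would exploit is that $\kappa$ is strictly increasing in $\Vert\bfx\Vert$, hence $\mathrm{sign}(\dot{\kappa})=\mathrm{sign}\!\left(\frac{d}{dt}\Vert\bfx\Vert\right)=\mathrm{sign}(\bfx\tr\dot{\bfx})=\mathrm{sign}(-\Vert\bfx\Vert^2+\gamma z)$. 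Since this branch is active only when $\dot{\kappa}(\Vert\bfx\Vert)\overline{s}<0$, i.e. exactly when $-\Vert\bfx\Vert^2+\gamma z<0$, adding the further negative term $\dot{\kappa}\overline{s}$ keeps $\dot{\mathcal{V}}_s<0$ for $\bfx\neq\zero$.

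Collecting the two branches establishes \eqref{eq:lyap_stab_cond2}; one also checks that at $\bfx=\zero$ the quantity $z$ vanishes and $\dot\kappa=0$, so the ``otherwise'' branch applies and gives $\dot{\mathcal{V}}_s(\zero)=0$, matching the equality case. Together with \eqref{eq:lyap_stab_cond1}--\eqref{eq:lyap_stab_cond3}, the Lyapunov theorem then yields global asymptotic convergence of \eqref{eq:paramet_ds_gamma} to $\hat{\bfx}=\zero$. I expect the only real subtlety to be the clamped branch above, where the geometric link between $\dot\kappa$ and the radial velocity $\frac{d}{dt}\Vert\bfx\Vert$ must be used, rather than a purely algebraic sign estimate.
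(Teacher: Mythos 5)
Your proof is correct, and while its skeleton (the candidate $\mathcal{V}_s$ from \eqref{eq:energy_tank_Lyapunov}, conditions \eqref{eq:lyap_stab_cond1} and \eqref{eq:lyap_stab_cond3} inherited from Sec.~\ref{subsec:energy_lyapunov}, and a two-branch analysis of $\dot{\mathcal{V}}_s$) matches the paper's, the crucial clamped branch is handled by a genuinely different argument. Your Case~I is identical to the paper's: $\dot{\mathcal{V}}_s = -(1-\alpha(s))\Vert\bfx\Vert^2 + (\gamma(z,s)-\beta(z,s))z \leq -(1-\alpha(s))\Vert\bfx\Vert^2$ by \eqref{eq:conditions_gamma} and \eqref{eq:conditions_alpha}. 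In Case~II the paper argues algebraically, splitting on the sign of $z$: for $z>0$ it uses $s\geq\kappa(\Vert\bfx\Vert)\overline{s}\Rightarrow\alpha(s)=0$ (by \eqref{eq:conditions_alpha}) and $\gamma=\beta$ (by \eqref{eq:conditions_gamma}) to identify $\gamma(z,s)z=-\dot{s}'$, and then feeds in the branch condition $\dot{s}'>\dot{\kappa}(\Vert\bfx\Vert)\overline{s}$. You instead exploit the chain-rule fact $\mathrm{sign}(\dot{\kappa})=\mathrm{sign}\bigl(\tfrac{d}{dt}\Vert\bfx\Vert\bigr)=\mathrm{sign}(\bfx\tr\dot{\bfx})$, so that the activation requirement $\dot{\kappa}(\Vert\bfx\Vert)\overline{s}<0$ by itself forces $\bfx\tr\dot{\bfx}=-\Vert\bfx\Vert^2+\gamma(z,s)z<0$, making $\dot{\mathcal{V}}_s=\bfx\tr\dot{\bfx}+\dot{\kappa}(\Vert\bfx\Vert)\overline{s}$ a sum of two strictly negative terms. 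Your route is shorter: it needs no sub-case on $z$, never invokes $\alpha(s)=0$ or $\gamma=\beta$, and uses only part of the branch condition (your implicit assumption $\overline{s}>0$ is harmless, since for $\overline{s}=0$ the branch never activates). It also sidesteps a sign slip in the paper's own derivation, which writes $-\dot{s}'<\dot{\kappa}(\Vert\bfx\Vert)\overline{s}$ where the branch condition actually yields $-\dot{s}'<-\dot{\kappa}(\Vert\bfx\Vert)\overline{s}$; the needed conclusion $\gamma(z,s)z+\dot{\kappa}(\Vert\bfx\Vert)\overline{s}<0$ survives either way, but your argument never encounters the issue. What the paper's computation buys in exchange is an explicit energy interpretation---in the clamped regime the energy extracted by the non-linear term equals the tank discharge $-\dot{s}'$---which motivates why \eqref{eq:conditions_alpha}--\eqref{eq:conditions_gamma} are designed as they are; your proof shows the stability claim actually depends on less of that design than the paper's derivation suggests. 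Finally, both proofs lean on the same bookkeeping assertion from Sec.~\ref{subsec:energy_lyapunov} (that \eqref{eq:conditions_alpha}--\eqref{eq:conditions_beta} keep $s\geq 0$, bounded, and zero at $\hat{\bfx}=\zero$), so you are on equal footing there.
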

\begin{proof}
The point $\hat{\bfx} = \zero$ is an equilibrium of \eqref{eq:paramet_ds_gamma} because $\kappa(\Vert\hat{\bfx}\Vert)=0$. Therefore, we have to show that 
$\mathcal{V}_s$ in \eqref{eq:energy_tank_Lyapunov} is a Lyapunov function for \eqref{eq:paramet_ds_gamma}.

First, we show that $\mathcal{V}_s$  is a proper the candidate Lyapunov function, i.e. it satisfies \eqref{eq:lyap_stab_cond1} and \eqref{eq:lyap_stab_cond3}. $\mathcal{V}_s$ is radially unbounded (condition \eqref{eq:lyap_stab_cond3}) because $s$ is bounded. Moreover, since the dynamics of the storage function $s$ is defined as in \eqref{eq:storage_dynamics}, it holds that $s(\hat{\bfx}=\zero)=0$. Since $\alpha(s)$ and $\beta(z,s)$ satisfy conditions \eqref{eq:conditions_alpha} and \eqref{eq:conditions_beta} respectively, and being $s_0 \geq 0$, it also holds that $s_t \geq 0,~\forall t$. Consequently, $\mathcal{V}_s > 0, ~\forall \bfx\neq\zero$ and $\mathcal{V}_s = 0$ only at $\bfx = \zero$ (condition \eqref{eq:lyap_stab_cond1}). 

Second, we prove that condition \eqref{eq:lyap_stab_cond2} holds. By taking the time derivative of $\mathcal{V}_s$ we have that $\dot{\mathcal{V}}_s = \bfx\tr\dot{\bfx} + \dot{s}$. The function $\dot{\mathcal{V}}_s$ vanishes at the equilibrium because $\hat{\bfx}\tr\dot{\bfx} = \dot{s}(\hat{\bfx})=0$. Therefore, we have to prove that $\dot{\mathcal{V}}_s < 0, \forall \bfx \neq \zero$. Considering the definition of $\dot{\bfx}$ in \eqref{eq:paramet_ds_gamma} and $\dot{s}$ in \eqref{eq:storage_dynamics}, we have to consider two cases.

\textit{Case I: } In this case $\dot{s} = \dot{s}' = \alpha(s)\Vert\bfx \Vert^2 - \beta(z,s)z$, hence 
\begin{equation}
\dot{\mathcal{V}}_s = -(1-\alpha(s))\Vert \bfx \Vert^2 + (\gamma(z,s)-\beta(z,s))z.
\label{eq:V_tank_dot_cont}
\end{equation}
Now, being $0\leq \alpha(s) < 1$ from \eqref{eq:conditions_alpha} and $\Vert \bfx \Vert^2 \geq 0$, the term $-(1-\alpha(s))\Vert \bfx \Vert^2$ in \eqref{eq:V_tank_dot_cont} is negative and vanishes only at $\bfx=\zero$. The second term $v = (\gamma(z,s)-\beta(z,s))z$ also vanishes at $\bfx=\zero$ because $z=\kappa(\Vert\bfx\Vert)\bfx\tr\bff(\bfx)$. Moreover, being $\gamma(z,s)-\beta(z,s) = 0$ for $z \geq 0$ and  $\gamma(z,s)-\beta(z,s) \geq 0$ for $z < 0$, it holds that $v \leq 0, \forall \bfx$. This allows as to conclude that $\dot{\mathcal{V}}_s < 0, \forall \bfx \neq \zero$ and $\dot{\mathcal{V}}_s = 0$ for $\bfx = \zero$ (condition \eqref{eq:lyap_stab_cond2}).

\textit{Case II: } The dynamics of $s$ is $\dot{s}=\dot{\kappa}(\Vert\bfz\Vert)\overline{s}$, hence
\begin{equation}
\dot{\mathcal{V}}_s = -\Vert \bfx \Vert^2 + \gamma(z,s)z + \dot{\kappa}(\Vert\bfx\Vert)\overline{s}.
\label{eq:V_tank_dot_cont_2}
\end{equation}
The terms $-\Vert \bfx \Vert^2$ and $\dot{\kappa}(\Vert\bfx\Vert)\overline{s}$ in \eqref{eq:V_tank_dot_cont_2} are negative, while the sign of $\gamma(z,s)z$ depends on the sign of $z$. For $z\leq 0$, it holds that $\dot{\mathcal{V}}_s<0, \forall \bfx \neq \zero$. For $z>0$, it holds from \eqref{eq:conditions_gamma} that $\gamma(z,s)=\beta(z,s)$. Being $s\geq\kappa(\Vert\bfx\Vert)\overline{s}$, it holds from \eqref{eq:conditions_alpha} that $\alpha(s) = 0$. From \eqref{eq:storage_classic}, $\alpha(s) = 0$ implies that $\beta(z,s)z = -\dot{s} $. Given these equalities and inspecting \eqref{eq:storage_dynamics}, it is straightforward to verify that $\gamma(z,s)z = \beta(z,s)z = -\dot{s}' < \dot{\kappa}(\Vert\bfx\Vert)\overline{s}$, that implies $\dot{\mathcal{V}}_s<0, \forall \bfx \neq \zero$ and $\dot{\mathcal{V}}_s = 0$ for $\bfx = \zero$.

Being condition \eqref{eq:lyap_stab_cond2} satisfied in all cases, we conclude that the DS in \eqref{eq:paramet_ds_gamma} has a GAS equilibrium at $\hat{\bfx} = \zero$. 
\end{proof}
 
\begin{remark}
Theorem~\ref{th:Lyap_tank_stabilty} still holds if the DS converges towards a different equilibrium $\hat{\bfx}'\neq\zero$. One has to simply apply the constant state translation $\bfy = \bfx - \hat{\bfx}'$ and define the DS as $\dot{\bfy} = -\bfy + \gamma(z,s)\kappa(\Vert \bfy \Vert)\bff(\bfy)$.  
\end{remark}

\subsection{Storage function initialization}\label{subsec:storage_init}
The initial value of the storage function affects the trajectory retrieved from the DS \eqref{eq:paramet_ds_gamma} and, as a consequence, the overall accuracy in reproducing the demonstrations. As qualitatively shown in Fig.~\ref{fig:storage_effects}, small values of $s_0 = \kappa(\Vert \bfx \Vert)\overline{s}$ cause the storage to be depleted too quickly introducing deviations from the demonstrated data (Fig.~\ref{fig:storage_effects}(a)~and~\ref{fig:storage_effects}(b)). For this reason, we propose an approach to estimate the value of $\overline{s}$ from training data. 
\begin{figure}[t]
	\centering
	\subfigure[$\overline{s} = 100\,$J]{\includegraphics[width=0.32\columnwidth]{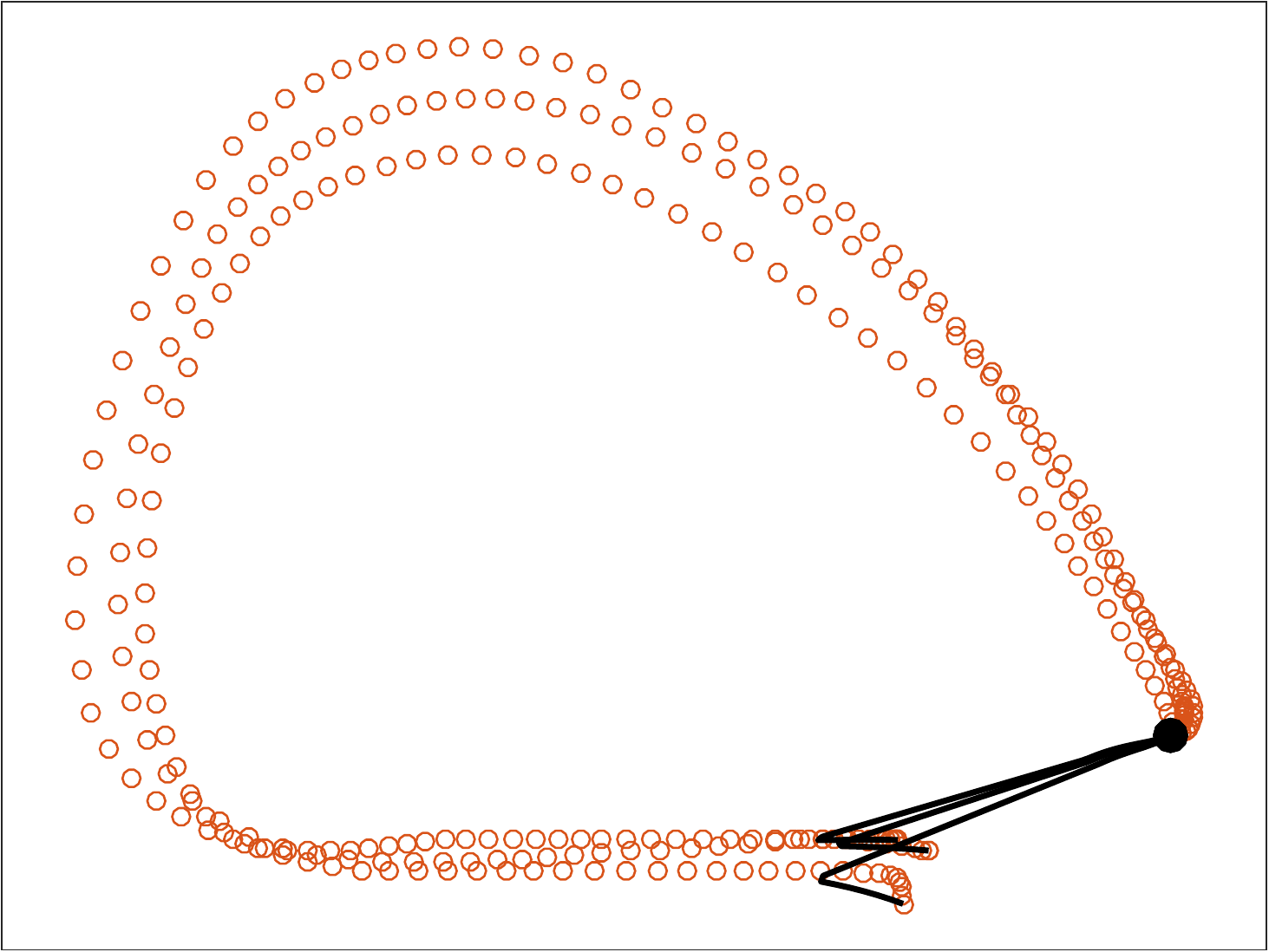}\label{fig:storage_effects_100}}
	\subfigure[$\overline{s} = 1000\,$J]{\includegraphics[width=0.32\columnwidth]{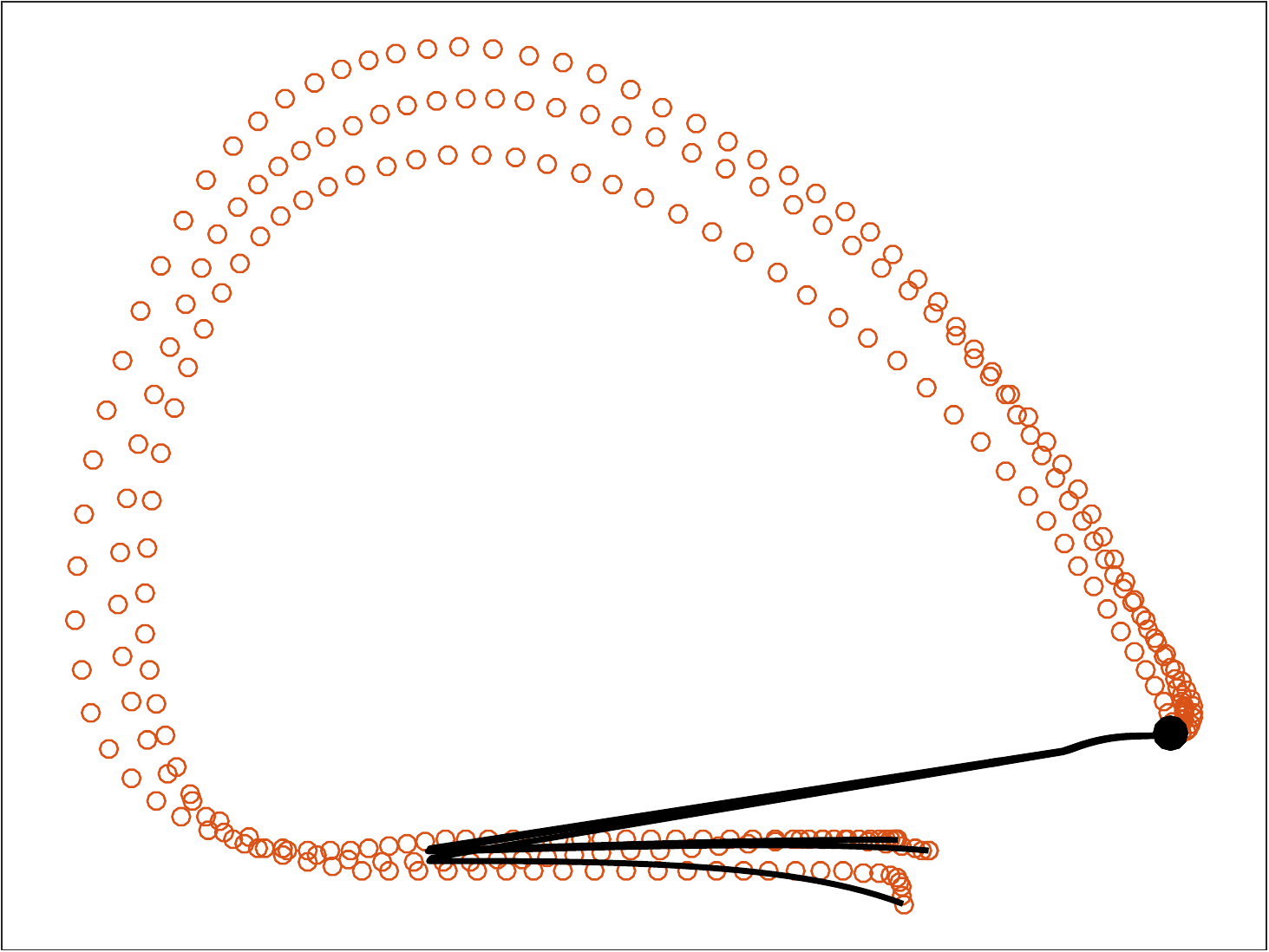}\label{fig:storage_effects_1000}}
	\subfigure[$\overline{s} = 10000\,$J]{\includegraphics[width=0.32\columnwidth]{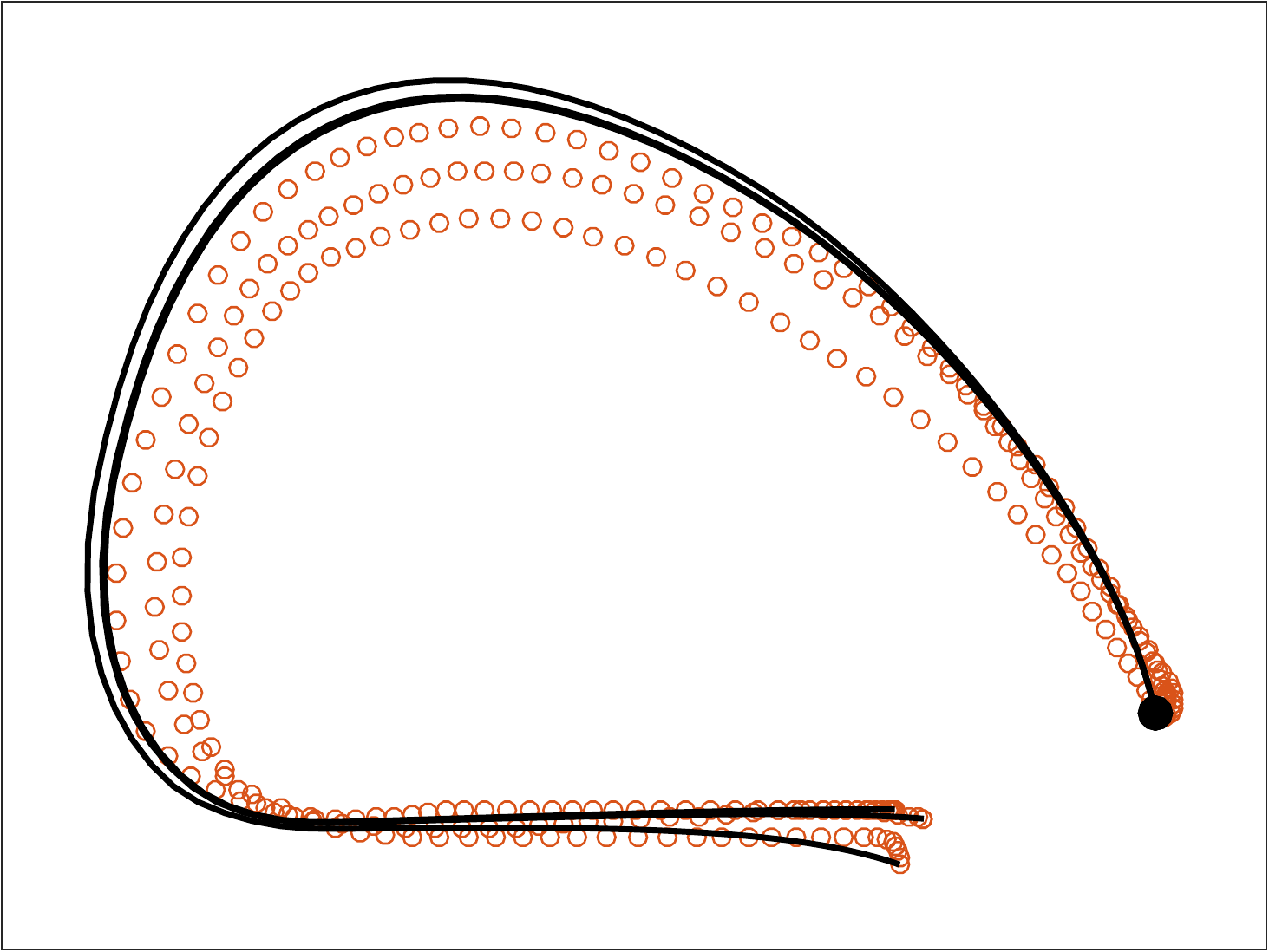}\label{fig:storage_effects_10000}}
    	\caption{Qualitative effects on the generated motions of to different choices of $s_0$. Brown circles represent the demonstrated positions, black solid lines the retrieved trajectories.}
    	\label{fig:storage_effects}
\end{figure} 
To this end, we consider
\begin{equation}
	\dot{s} = \begin{cases}
			   	-\bfx\tr\bff(\bfx) \quad \bfx\tr\bff(\bfx) > 0 \\ 
			    0 ~~\qquad \qquad \text{otherwise}
	\end{cases},
	\label{eq:storage_dynamics_worst}
\end{equation}
that is the dynamics in \eqref{eq:storage_classic} with $\alpha(s)=0$, $\kappa(\Vert\bfx\Vert)=1$, and $\beta(z,s)=1$ for $z > 0$ and $\beta(z,s)=0$ for $z \leq 0$. Notice that, being $\alpha(s)=0$, we do not consider the possible charge introduced by $\Vert \bfx \Vert^2$. Moreover, being $\beta(z,s)=0$ for $z \leq 0$, we do not consider the possible charge introduced by $z \leq 0$. Therefore, the storage dynamics in \eqref{eq:storage_dynamics_worst} only considers the depleted energy. Now, we compute $\dot{s}_d^t = -z_d^t = -\kappa(\Vert\bfx_d^t\Vert)(\bfx_d^t)\tr\bff(\bfx_d^t)$ for all the training data ($t=1,\ldots,T$ and $d=1,\ldots,D$), and select
\begin{equation}
\overline{s} = \max_{d = 1,\ldots,D} \overline{s}_{d} = \max_{d = 1,\ldots,D} \sum_{t=1}^T \dot{s}_d^t \delta t,
\label{eq:initial_storage}
\end{equation}
where $\delta t$ is the sampling time. Each $\overline{s}_{d}$ represents an upper bound on the energy depleted by the system \eqref{eq:paramet_ds_gamma} to reproduce the $d$-th demonstration and, by selecting $\overline{s} = \max_{d = 1,\ldots,D} \overline{s}_{d}$ and $s_0 = \kappa(\Vert \bfx \Vert)\overline{s}$, there will be enough energy to accurately reproduce the demonstrations without affecting the stability. Note that the $\overline{s}$ in~\eqref{eq:initial_storage} does not guarantee that the storage is not depleted for any initial condition but only sufficiently close to the demonstrations. However, even if the storage is depleted before reaching the goal, the DS~\eqref{eq:paramet_ds_gamma} is still stable (Theorem~\ref{th:Lyap_tank_stabilty}) and generated motion converges to the goal.
 



\section{Results and Comparisons}\label{sec:lasa_test}
\begin{figure*}[t]
	\centering
	\includegraphics[width=0.07\textwidth]{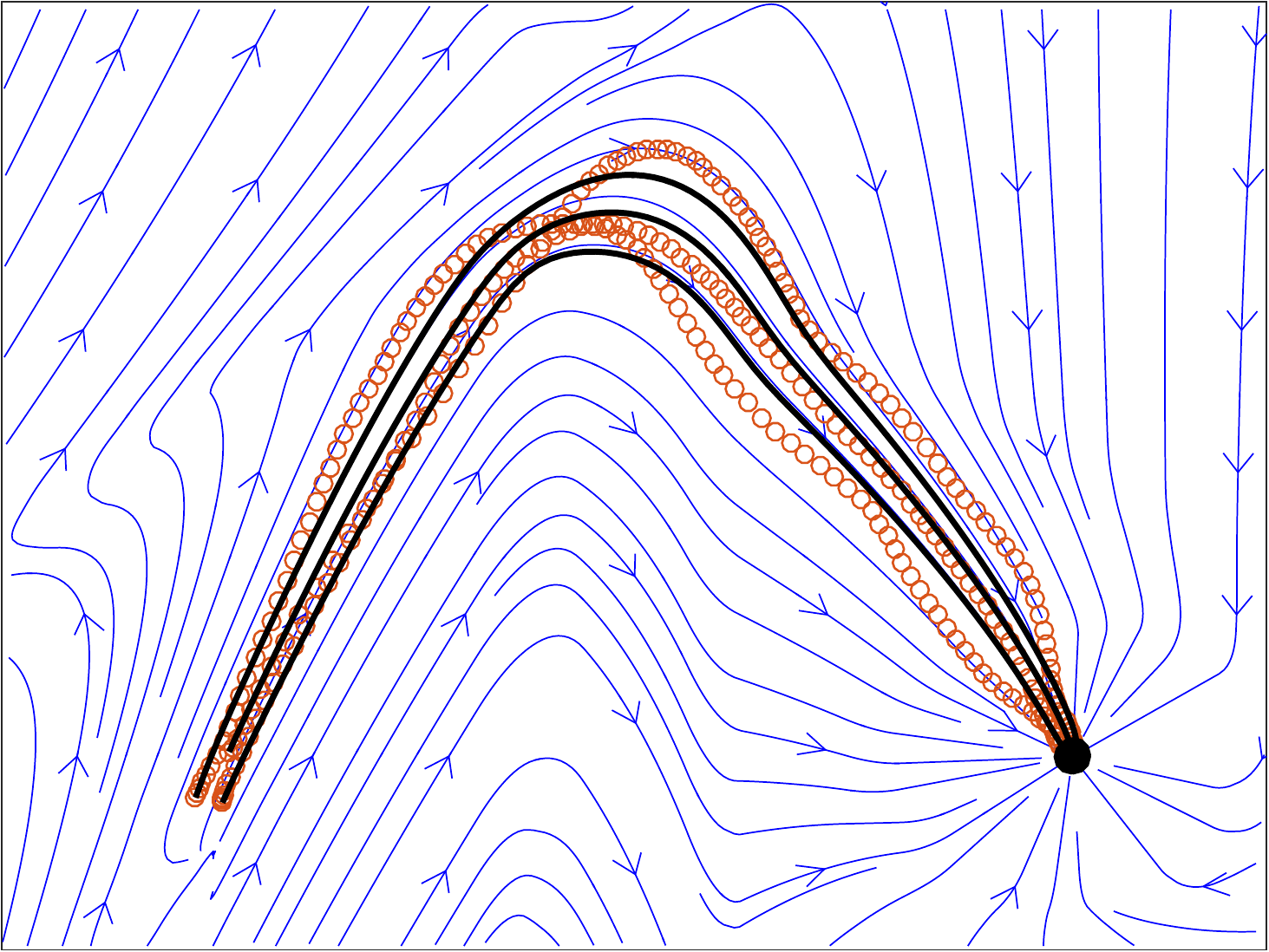}
	\includegraphics[width=0.07\textwidth]{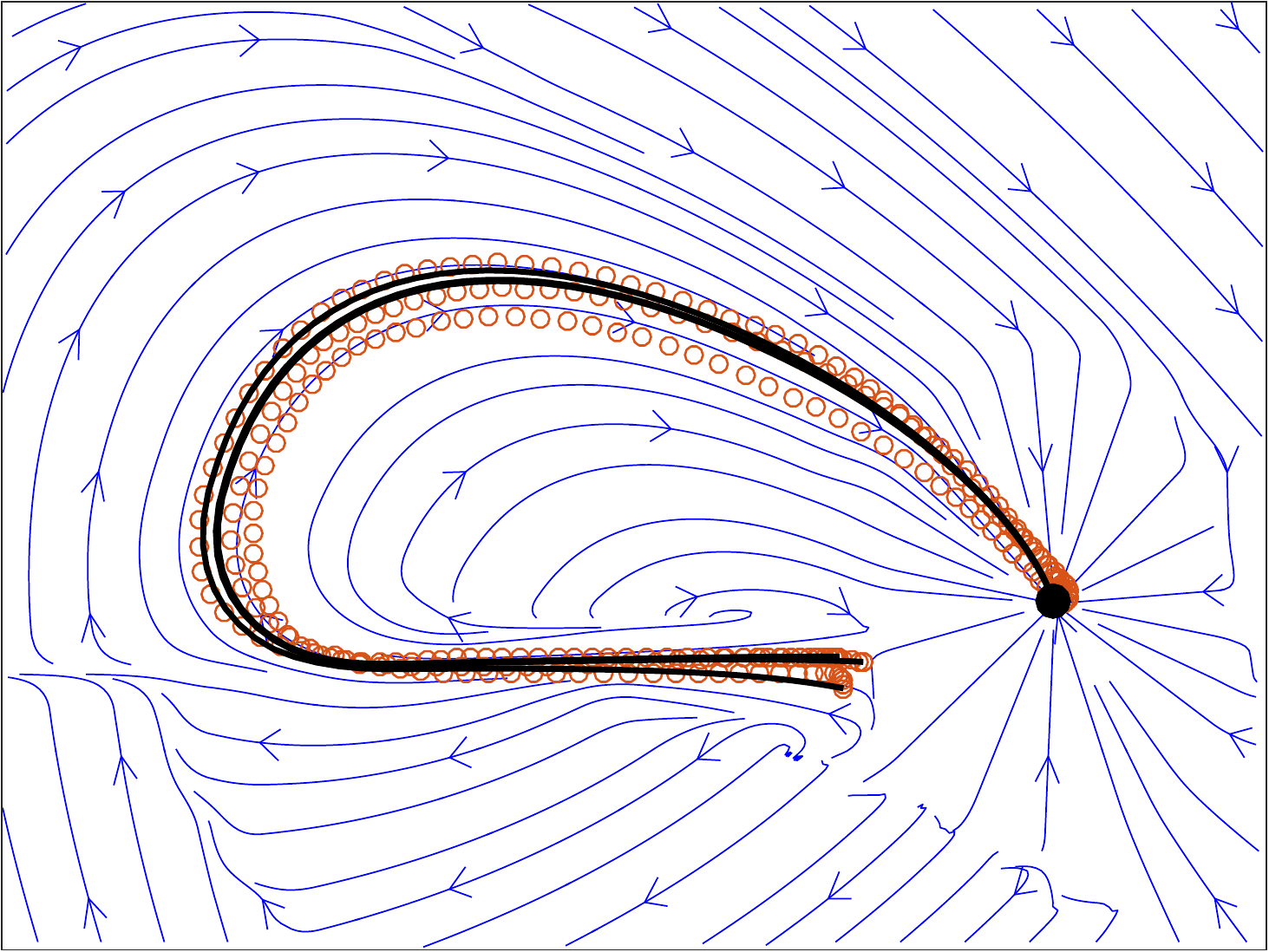}
	\includegraphics[width=0.07\textwidth]{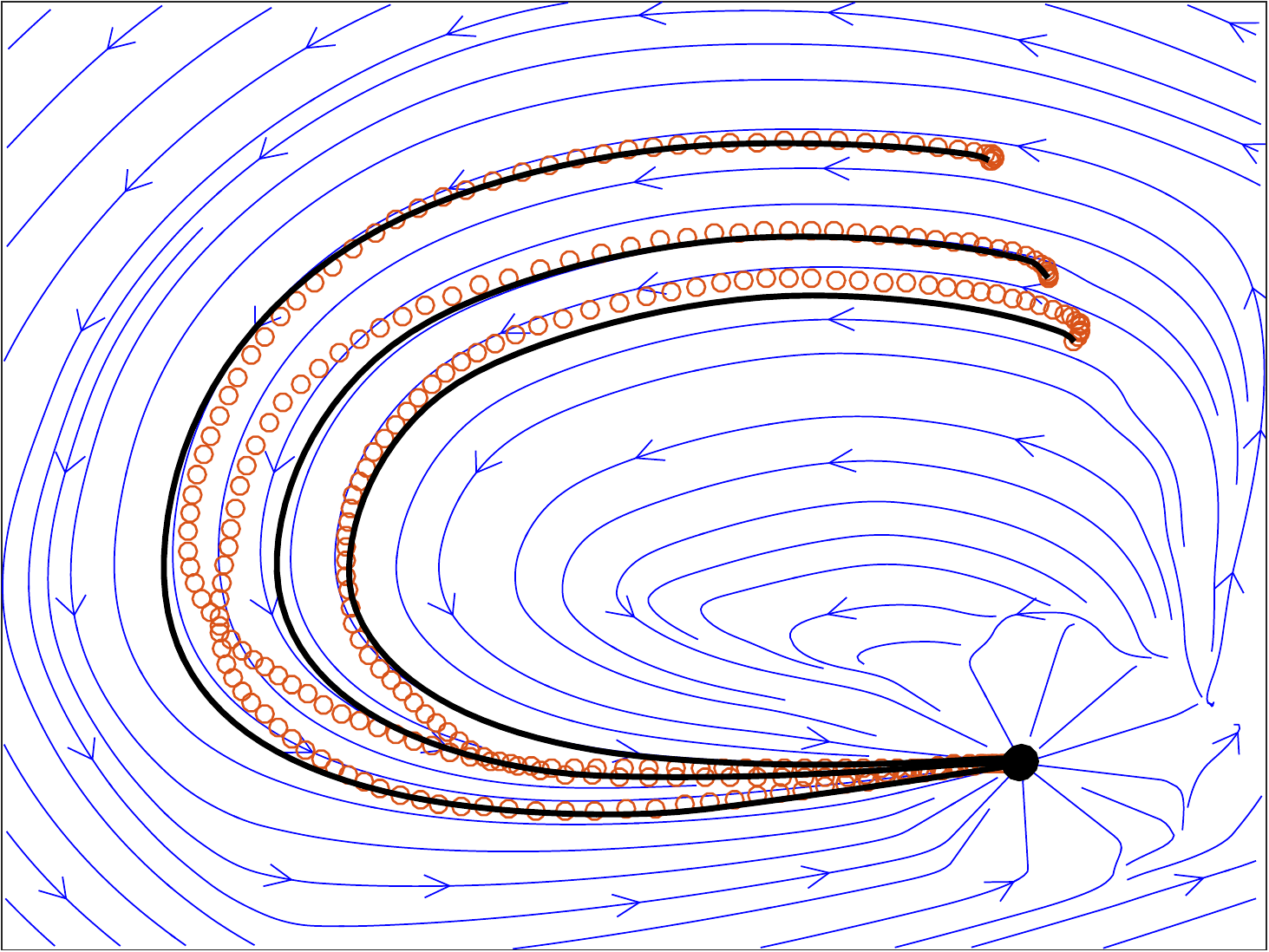}
	\includegraphics[width=0.07\textwidth]{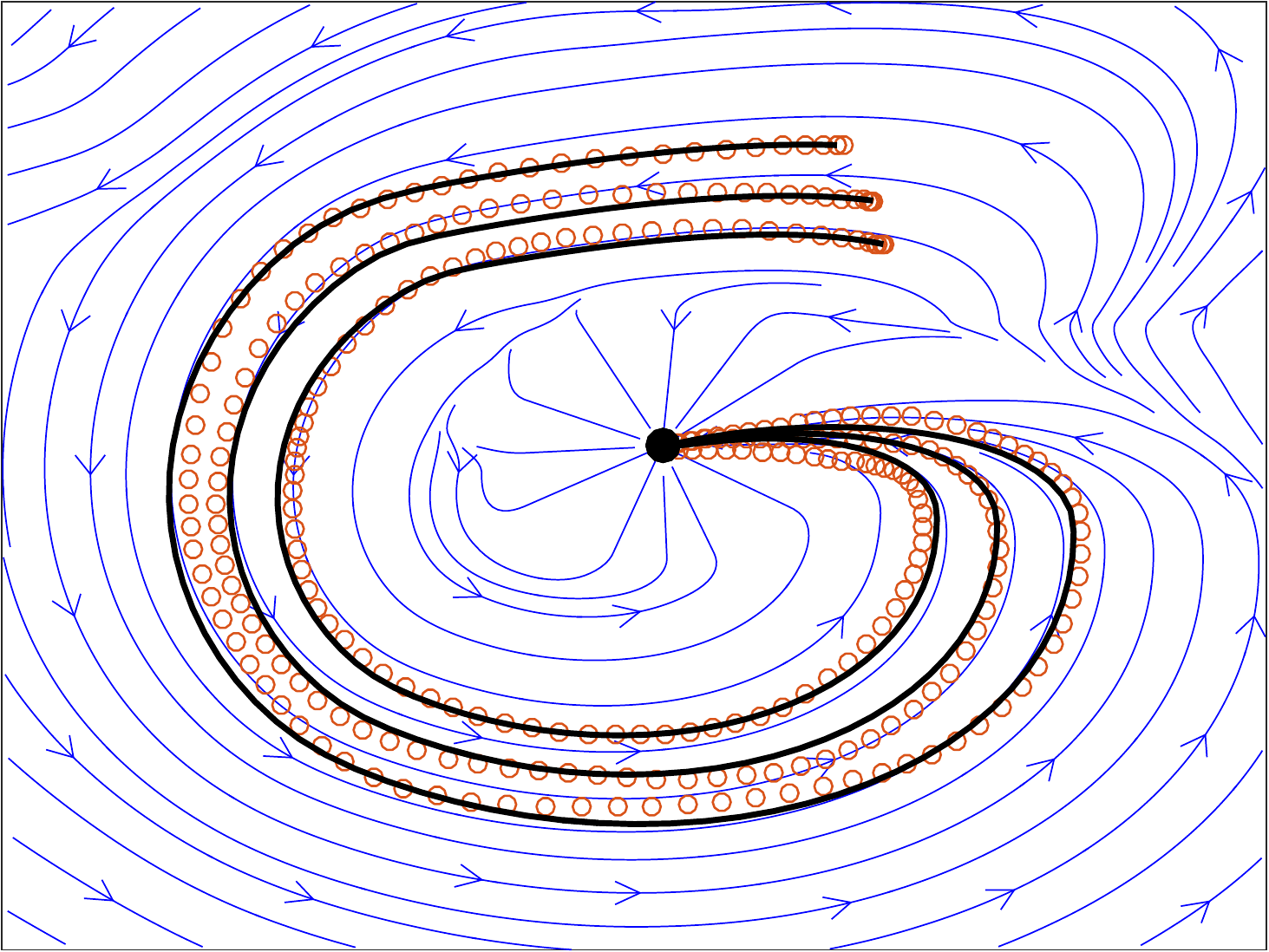}
	\includegraphics[width=0.07\textwidth]{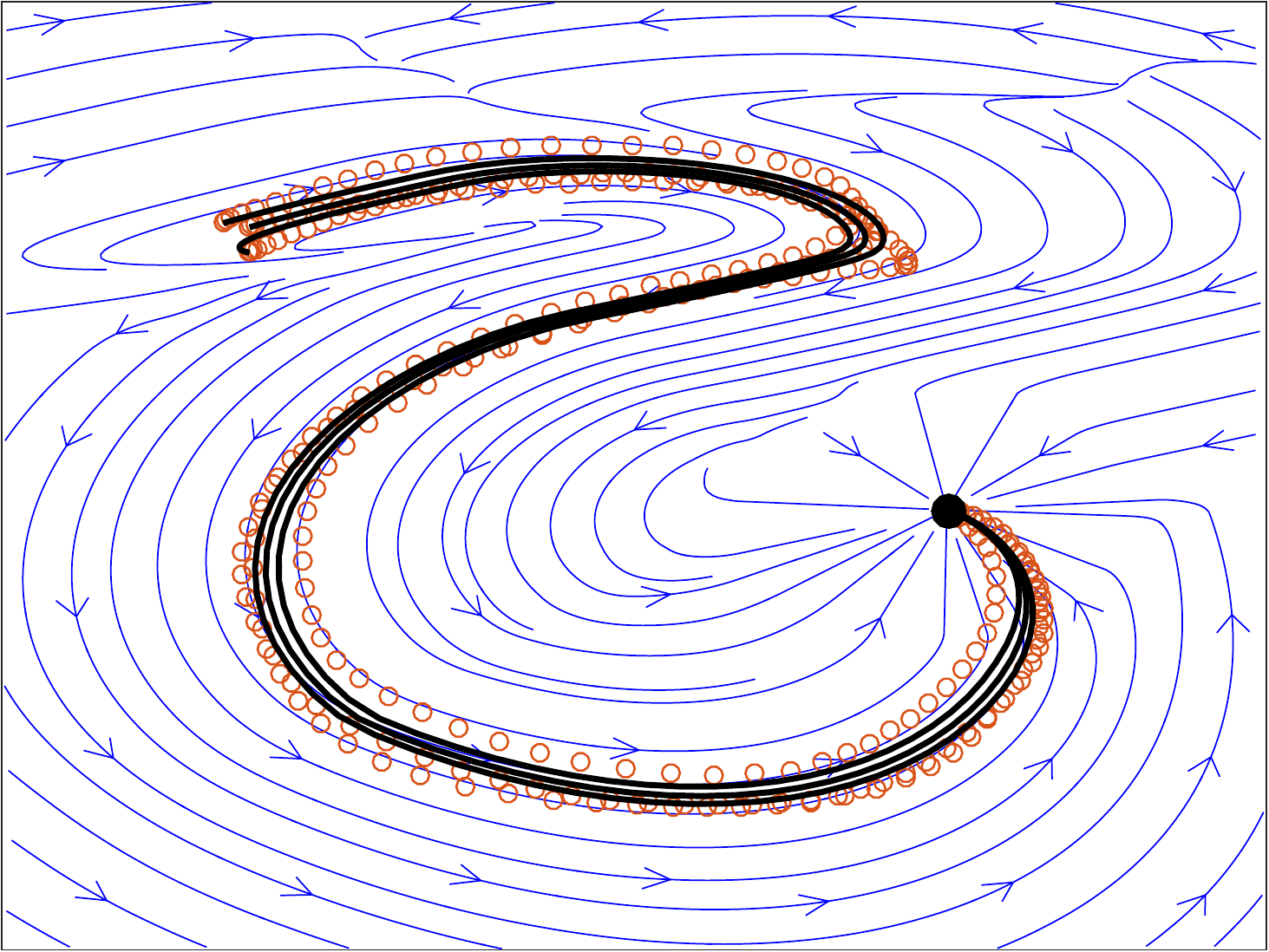}
  \includegraphics[width=0.07\textwidth]{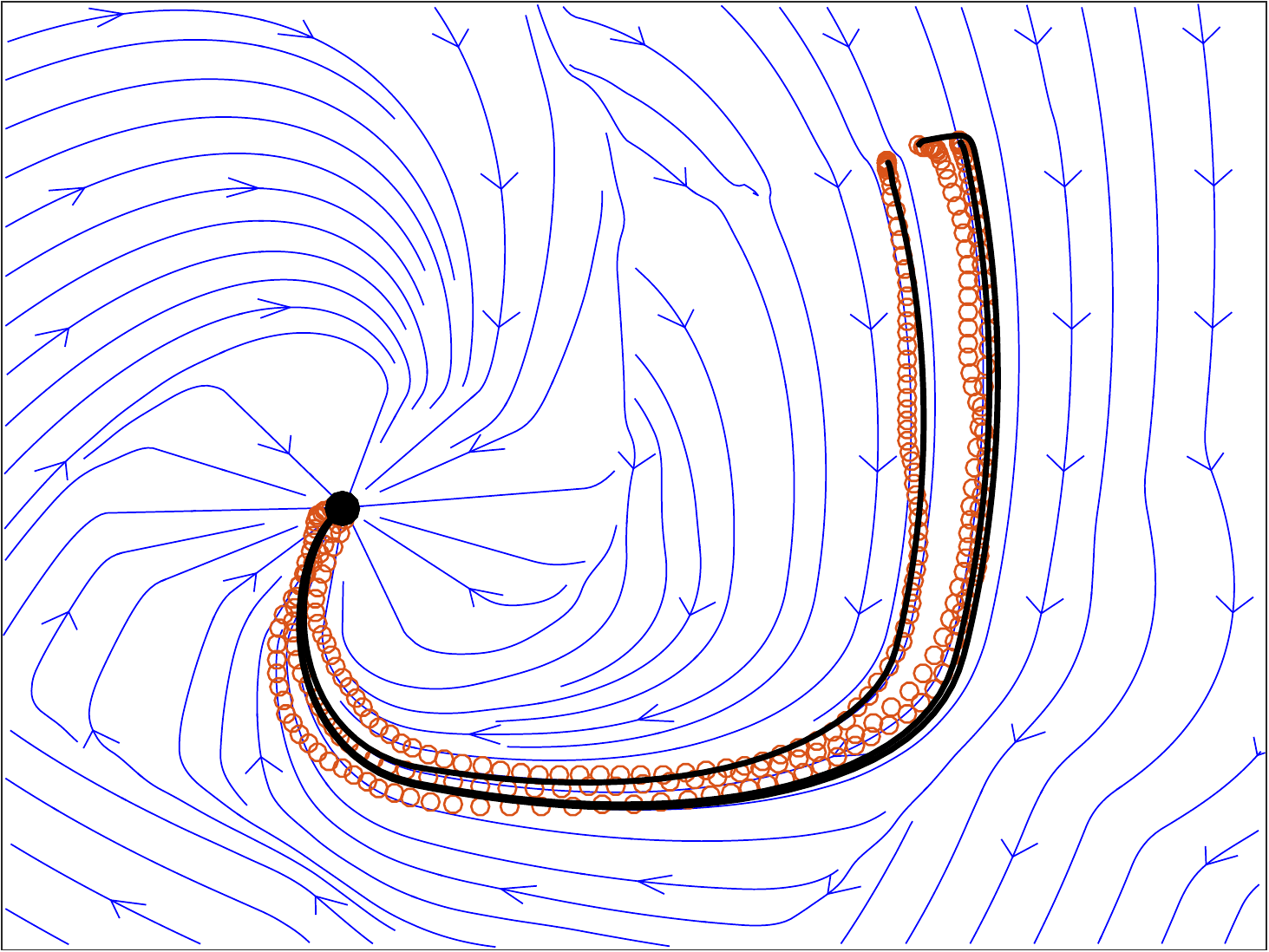}
	\includegraphics[width=0.07\textwidth]{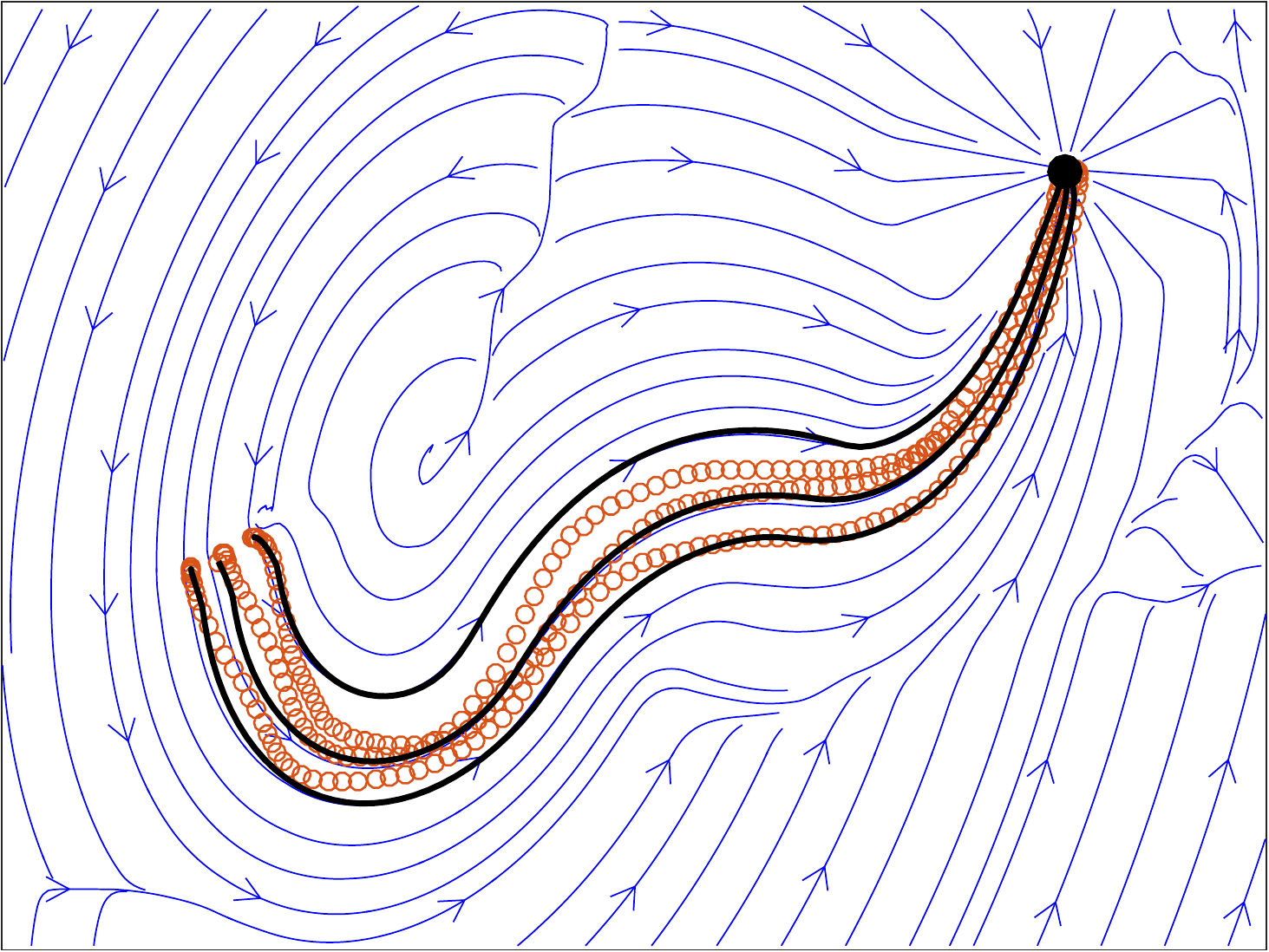}
	\includegraphics[width=0.07\textwidth]{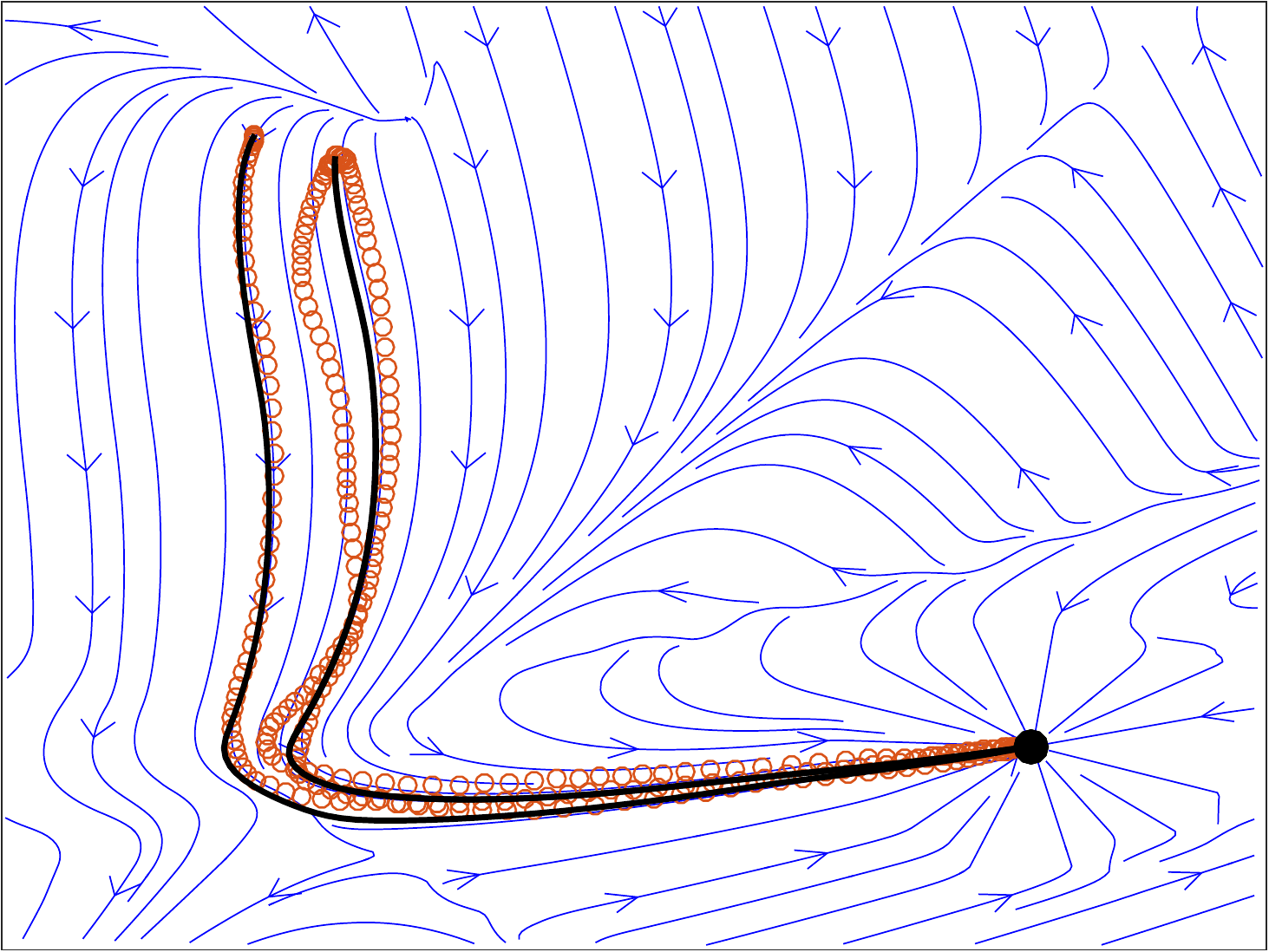}
	\includegraphics[width=0.07\textwidth]{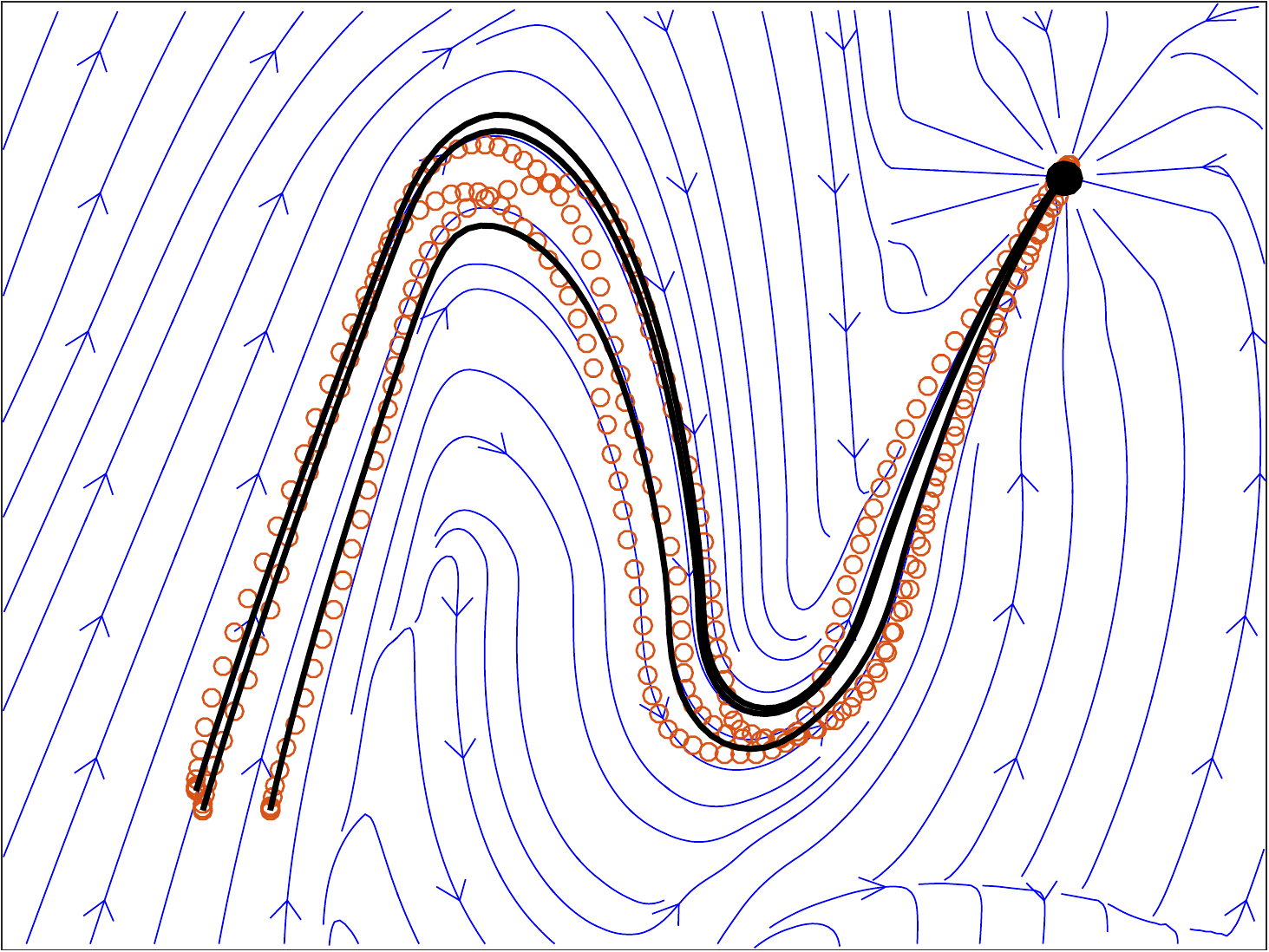}
	\includegraphics[width=0.07\textwidth]{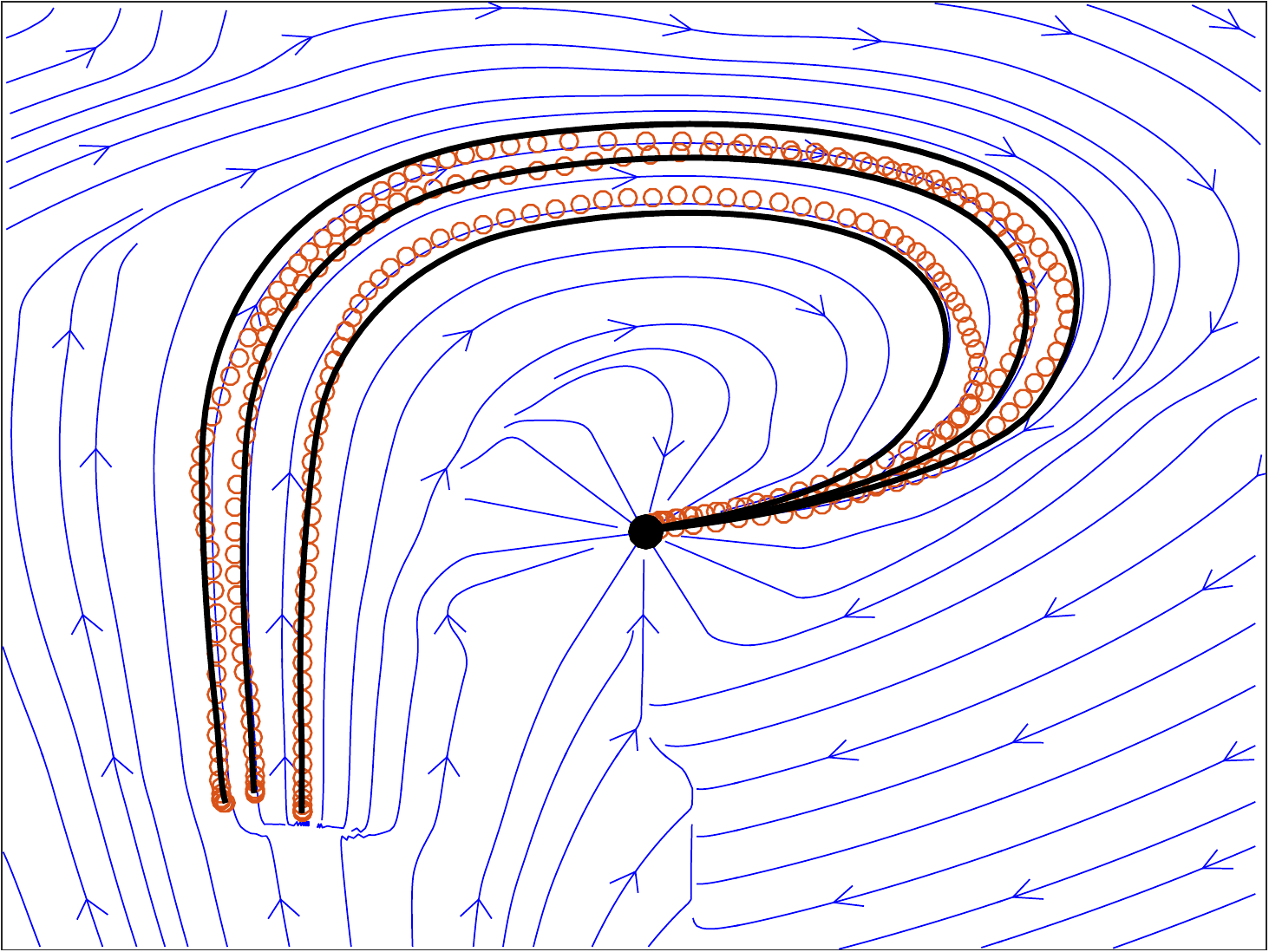}
	\includegraphics[width=0.07\textwidth]{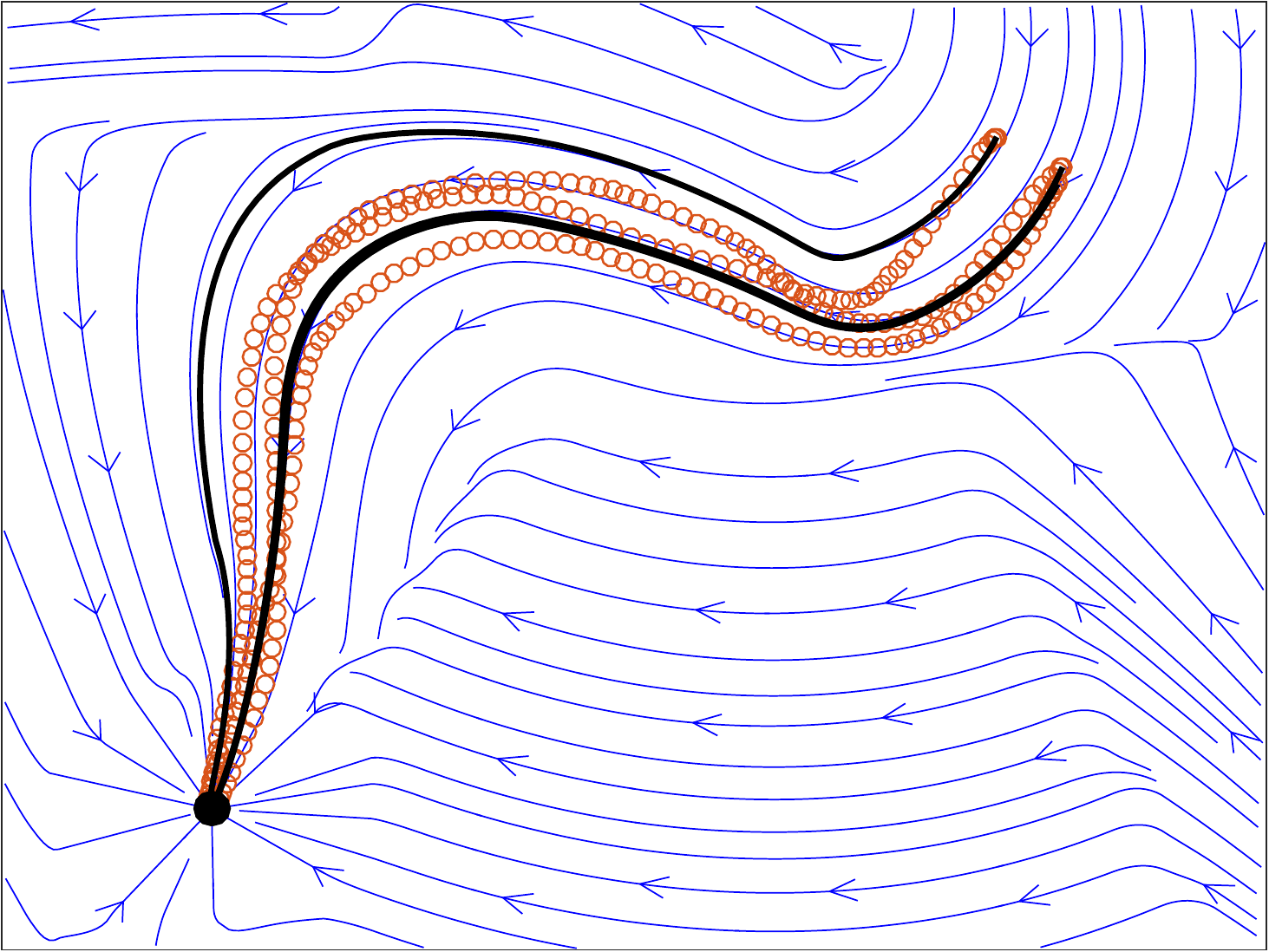}
	\includegraphics[width=0.07\textwidth]{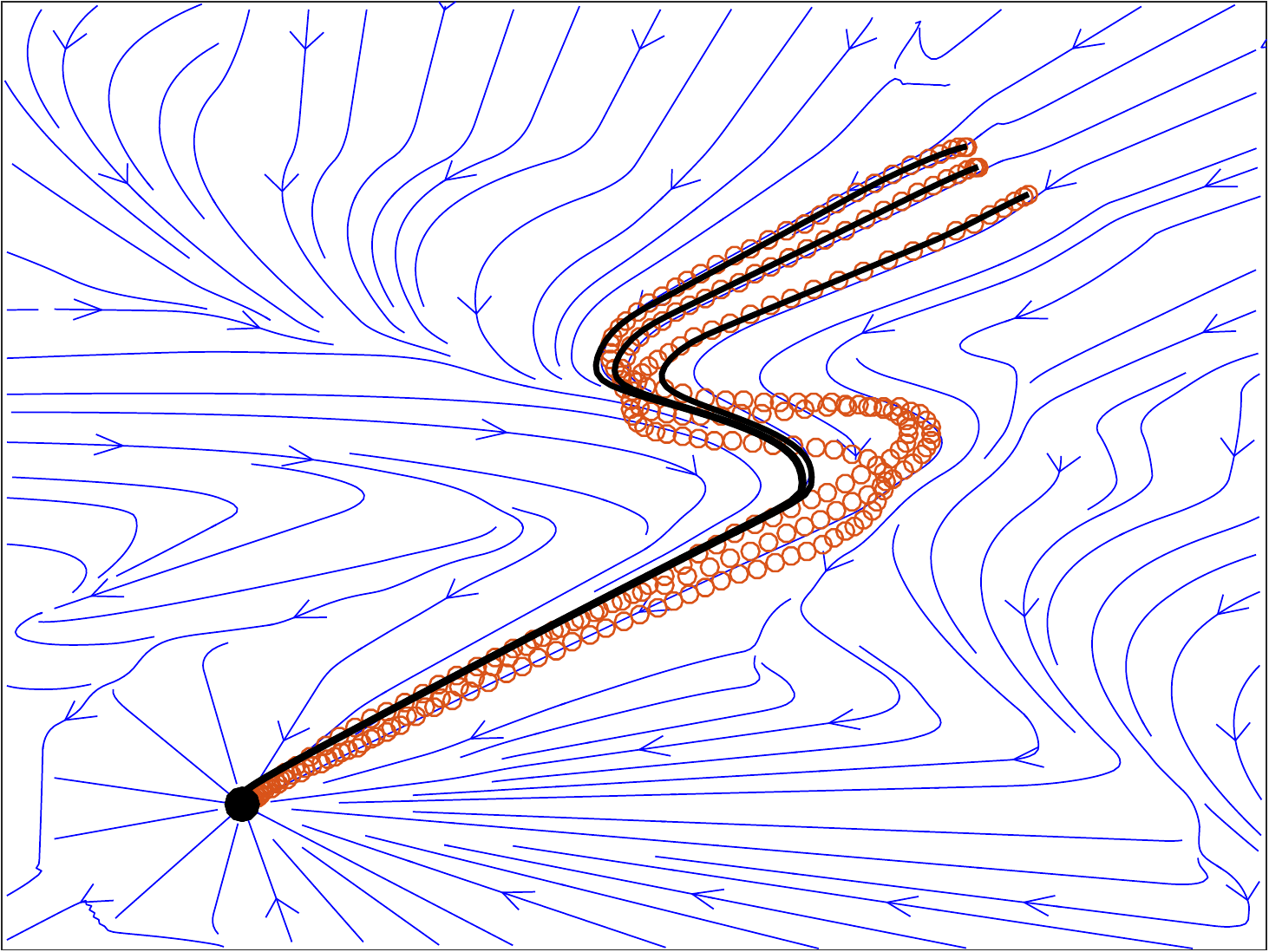}
	\includegraphics[width=0.07\textwidth]{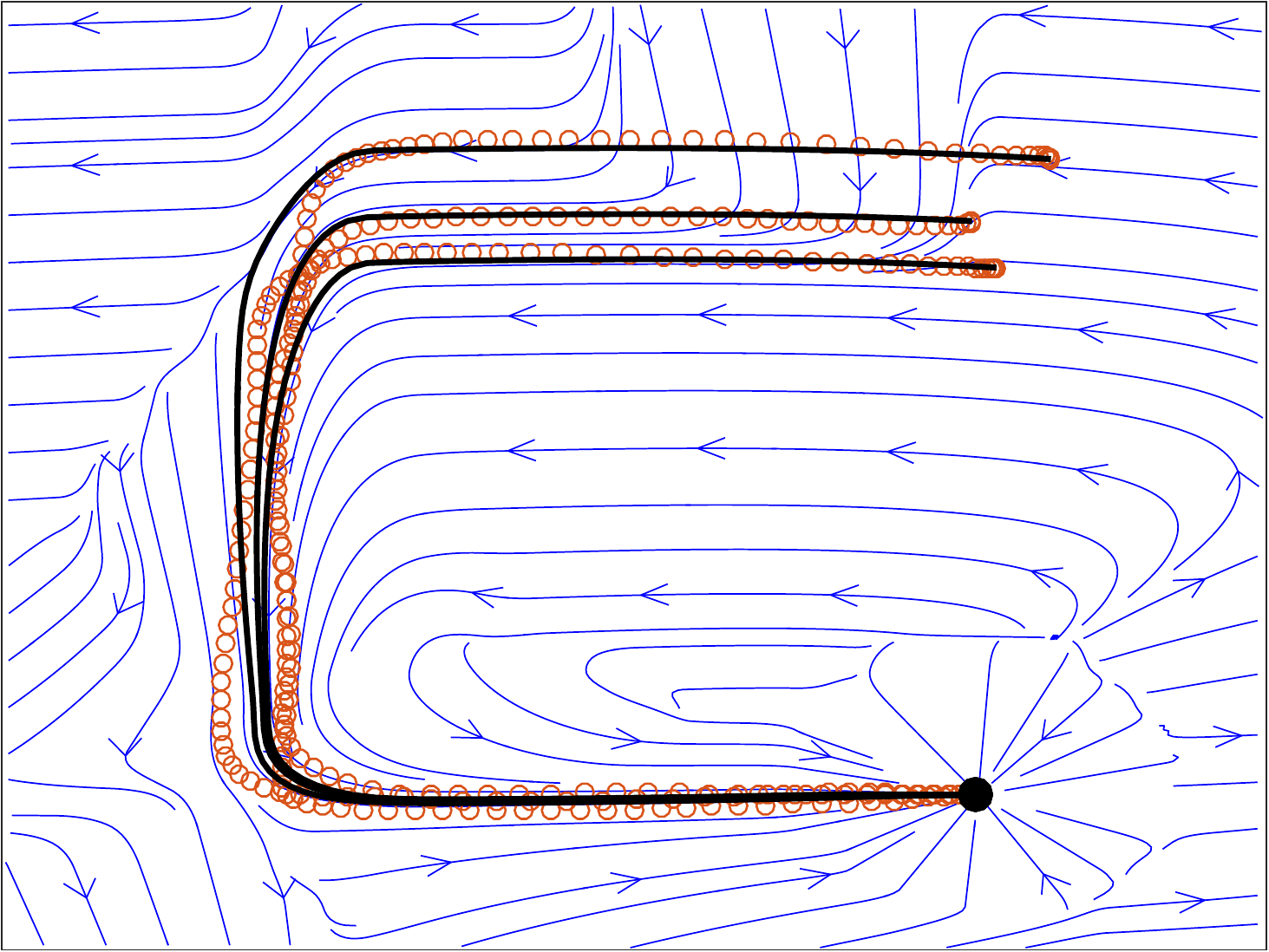}
	
	\includegraphics[width=0.07\textwidth]{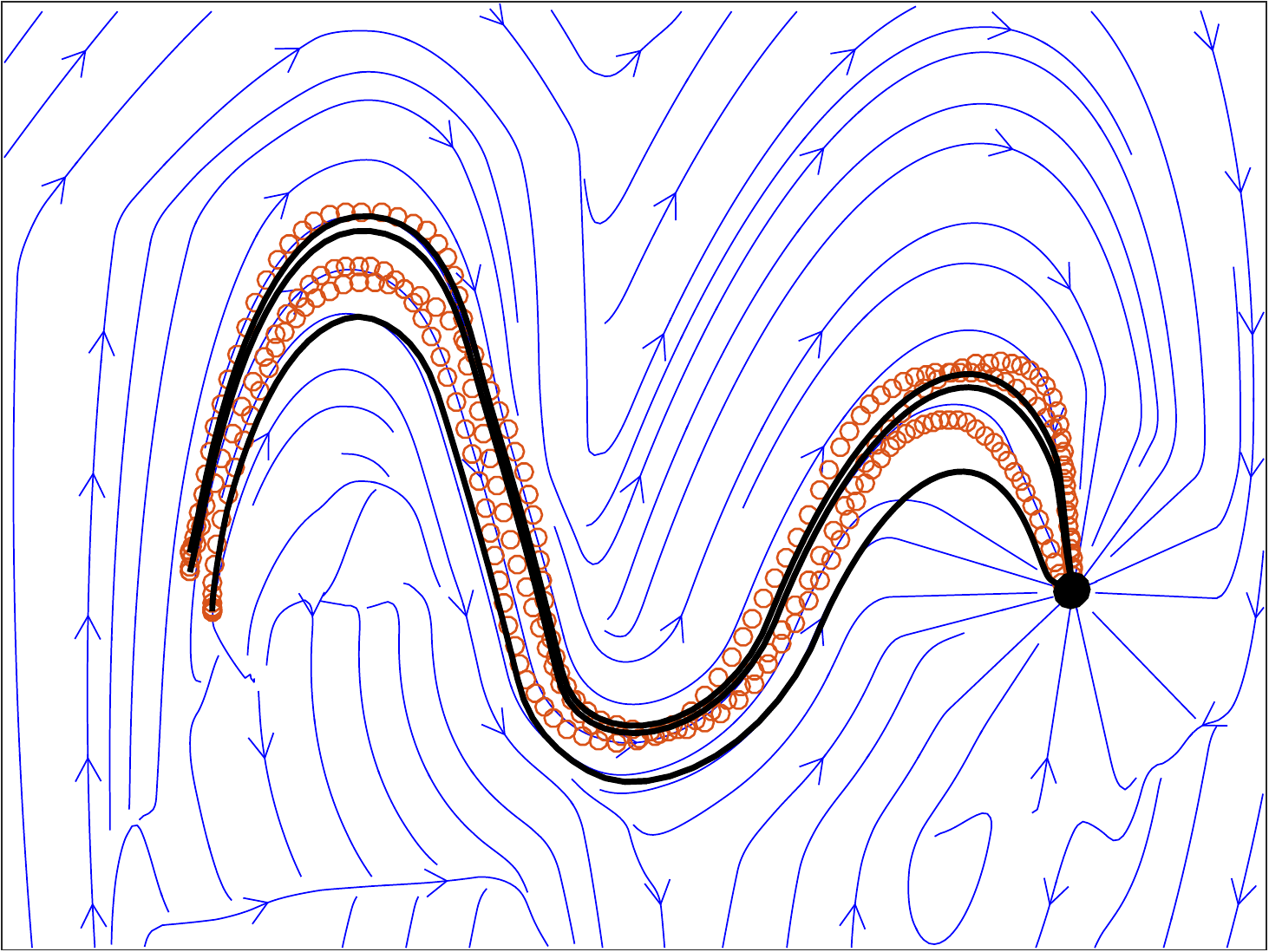}
	\includegraphics[width=0.07\textwidth]{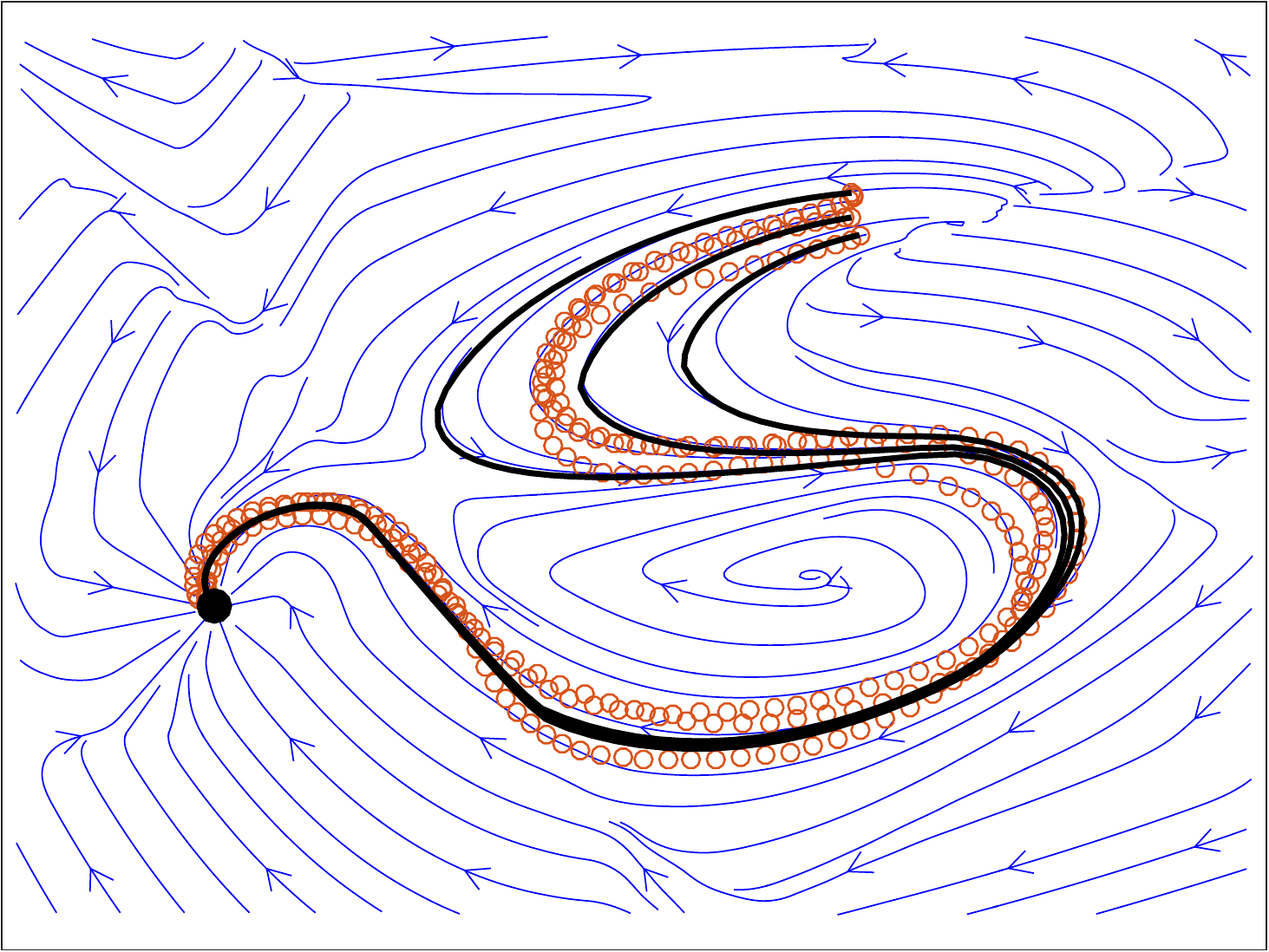}
	\includegraphics[width=0.07\textwidth]{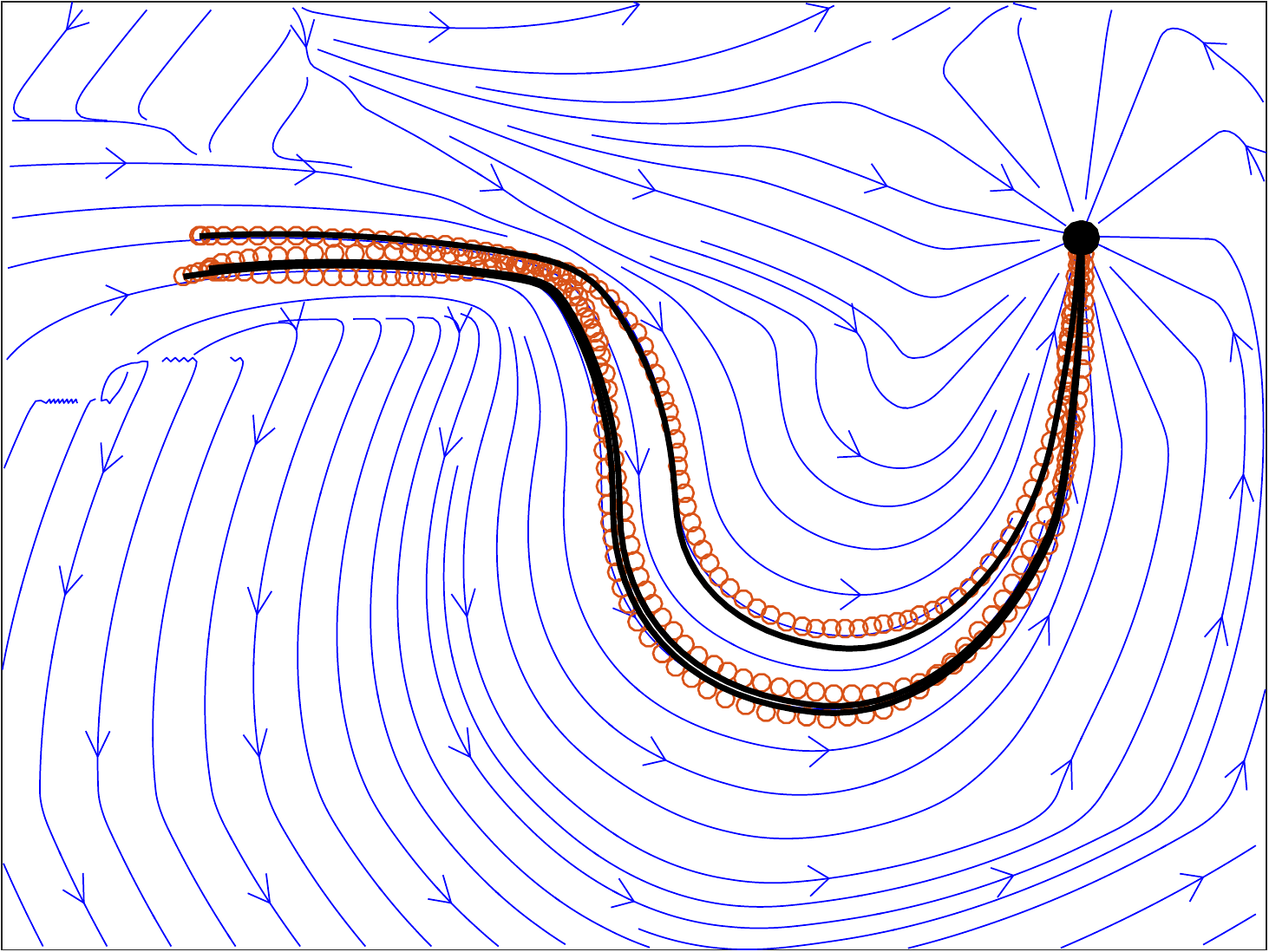}
	\includegraphics[width=0.07\textwidth]{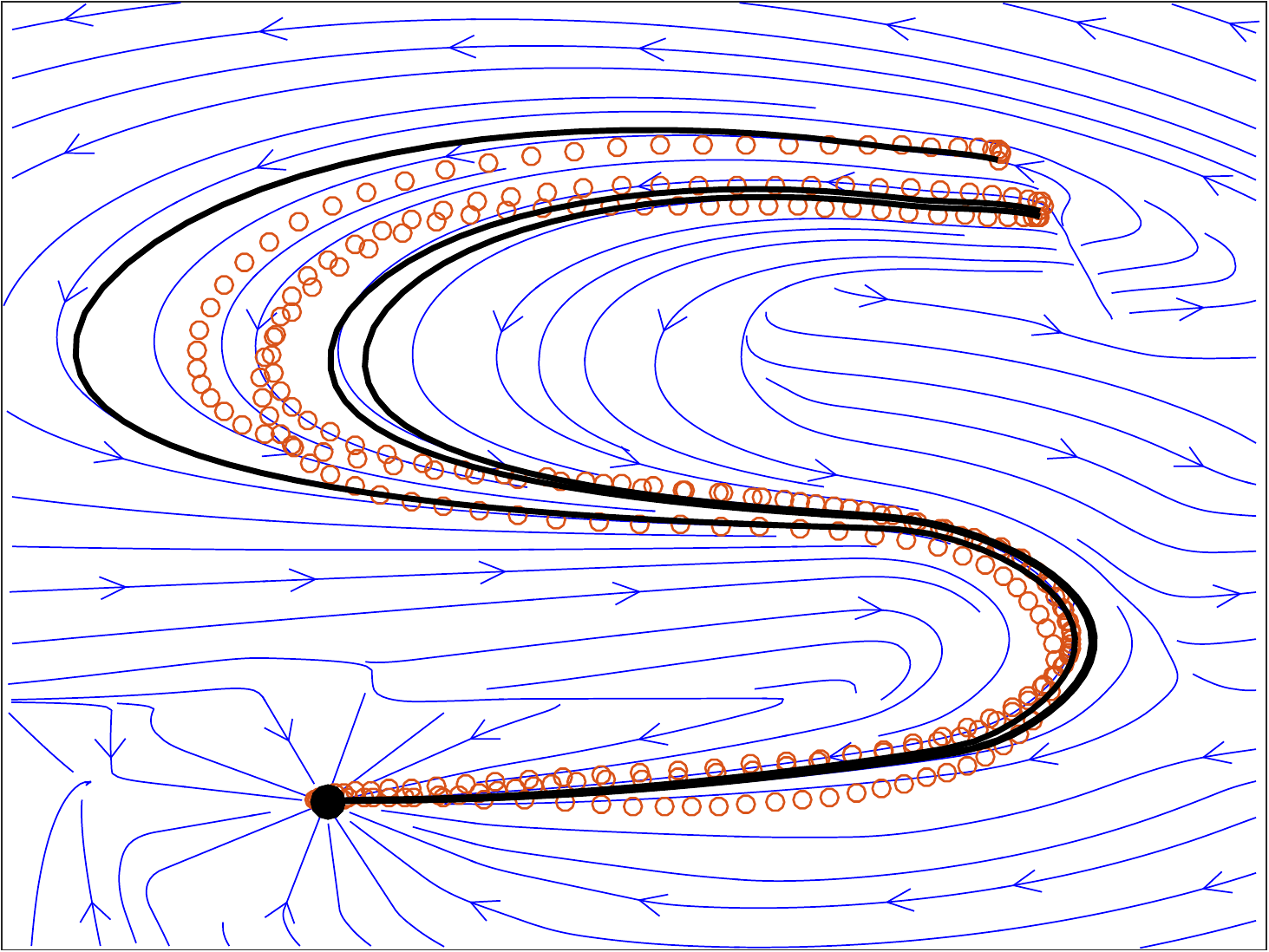}
	\includegraphics[width=0.07\textwidth]{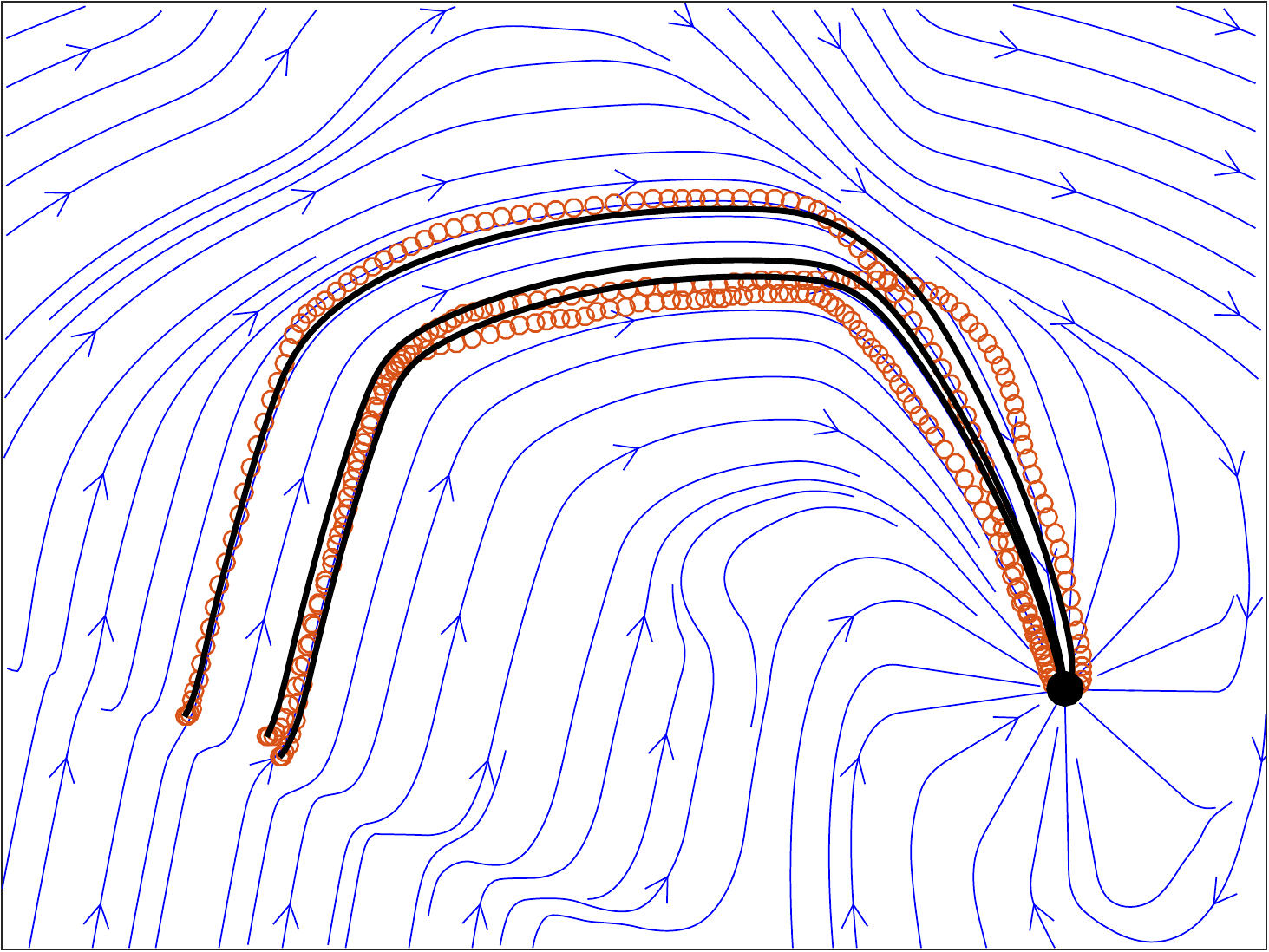}	
	\includegraphics[width=0.07\textwidth]{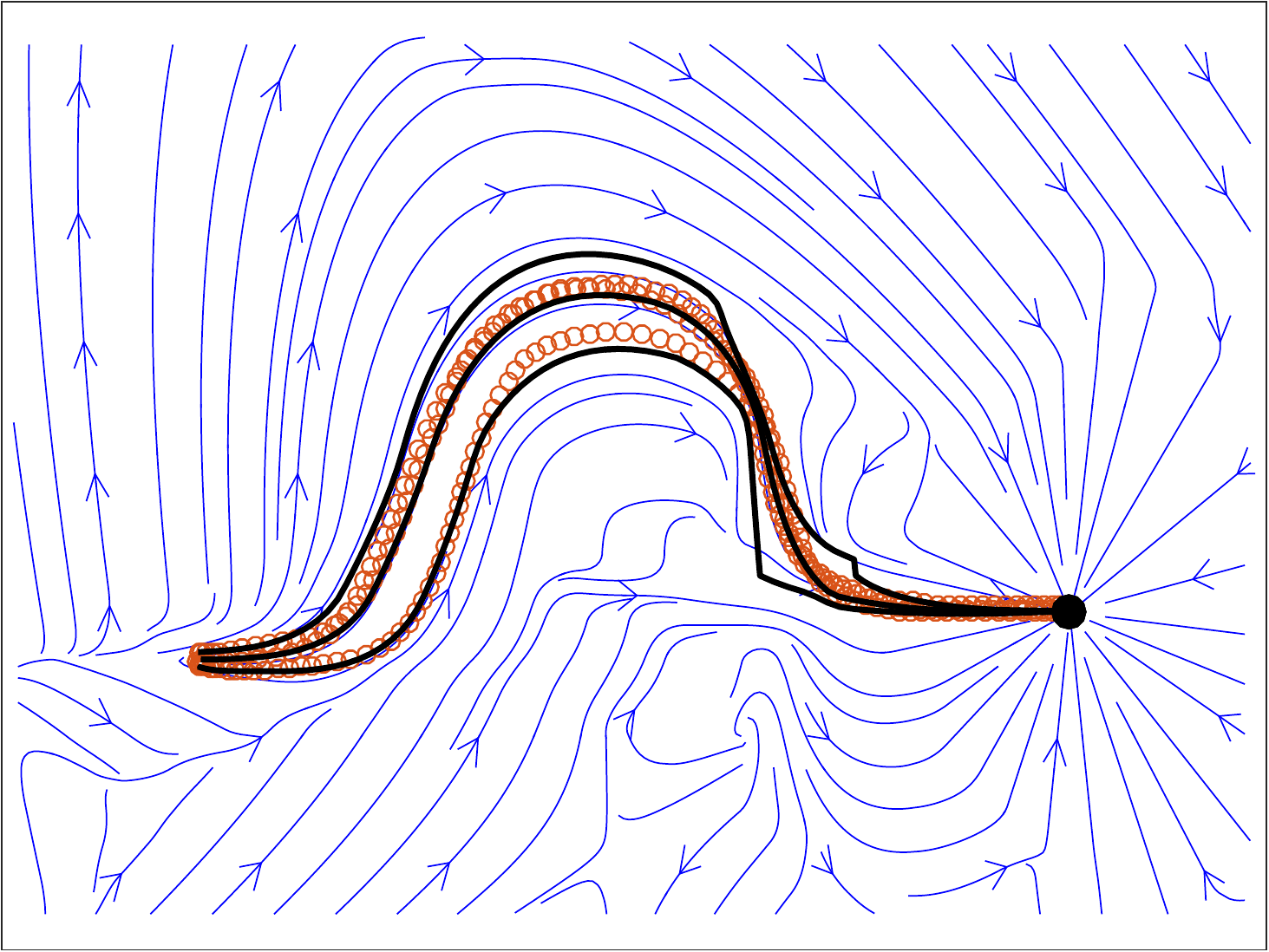}
	\includegraphics[width=0.07\textwidth]{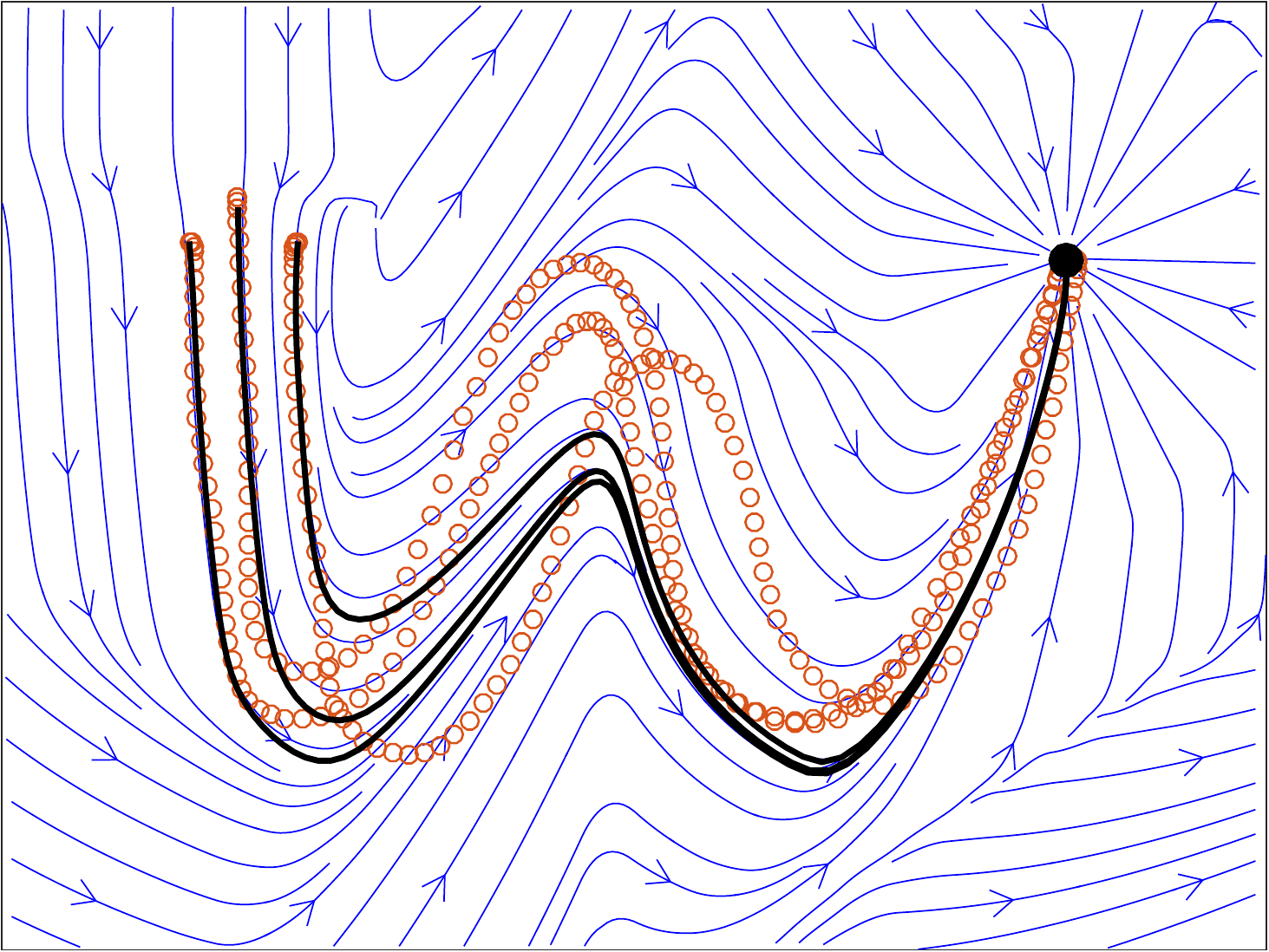}	
	\includegraphics[width=0.07\textwidth]{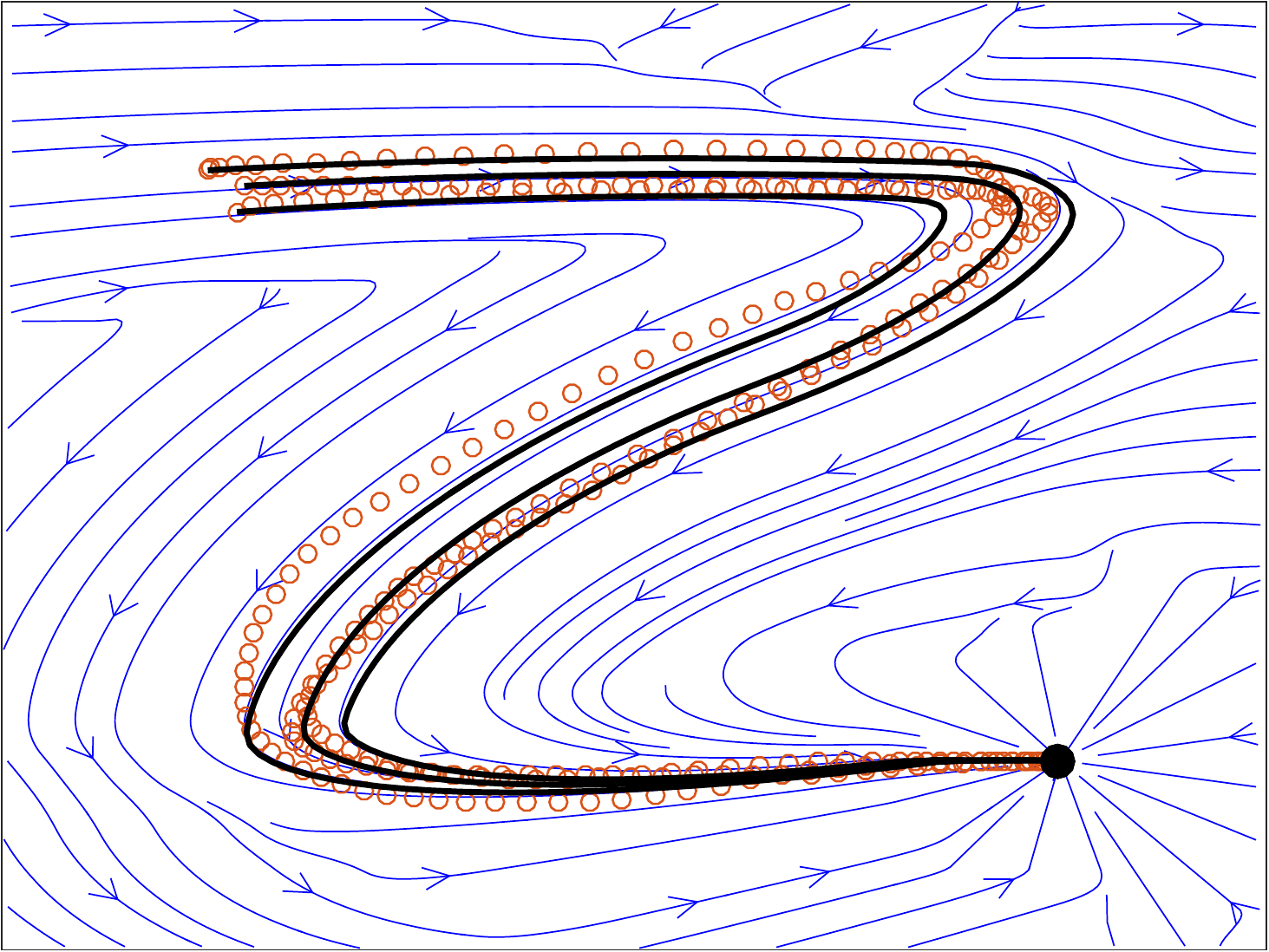}
	\includegraphics[width=0.07\textwidth]{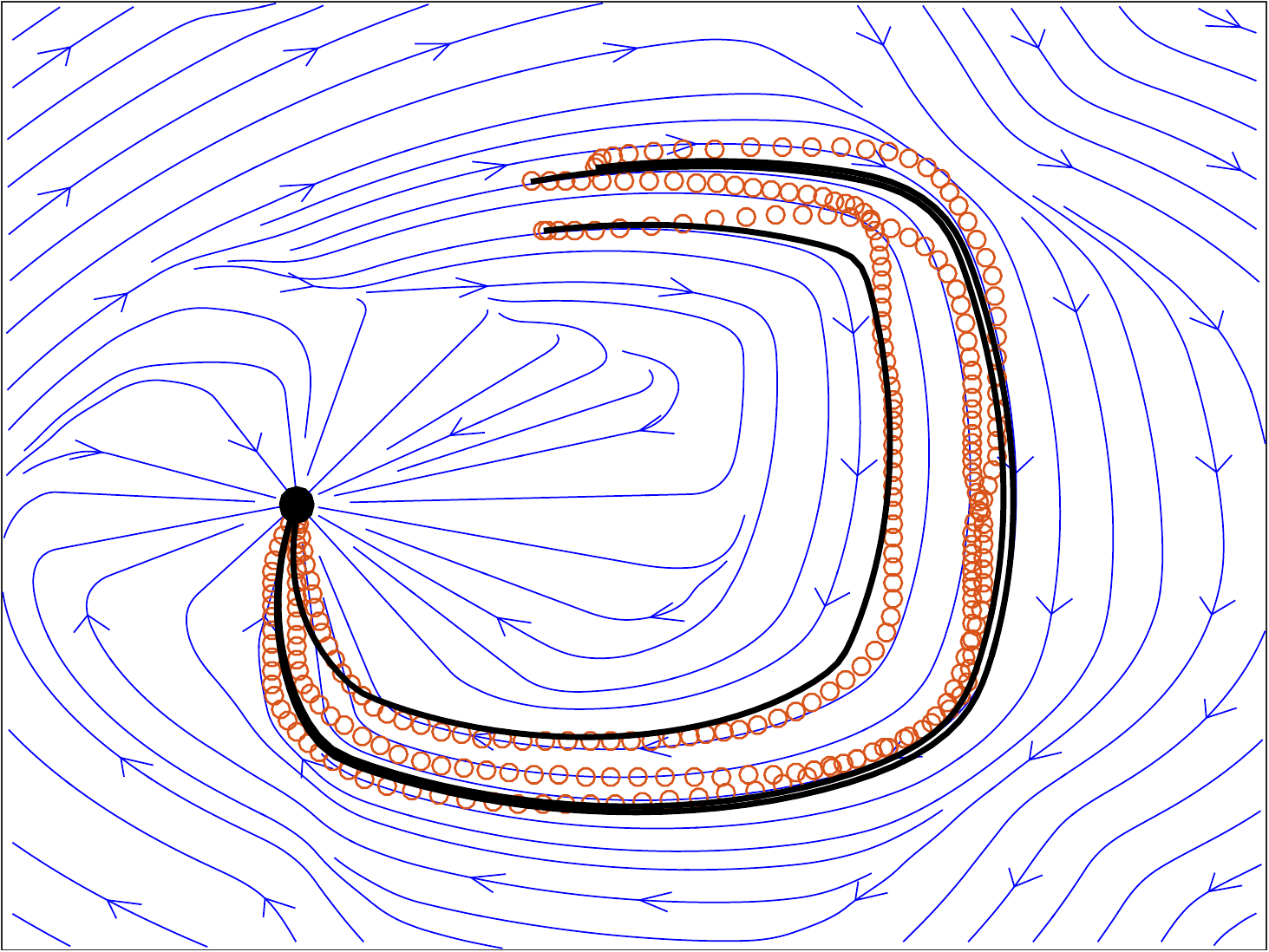}
	\includegraphics[width=0.07\textwidth]{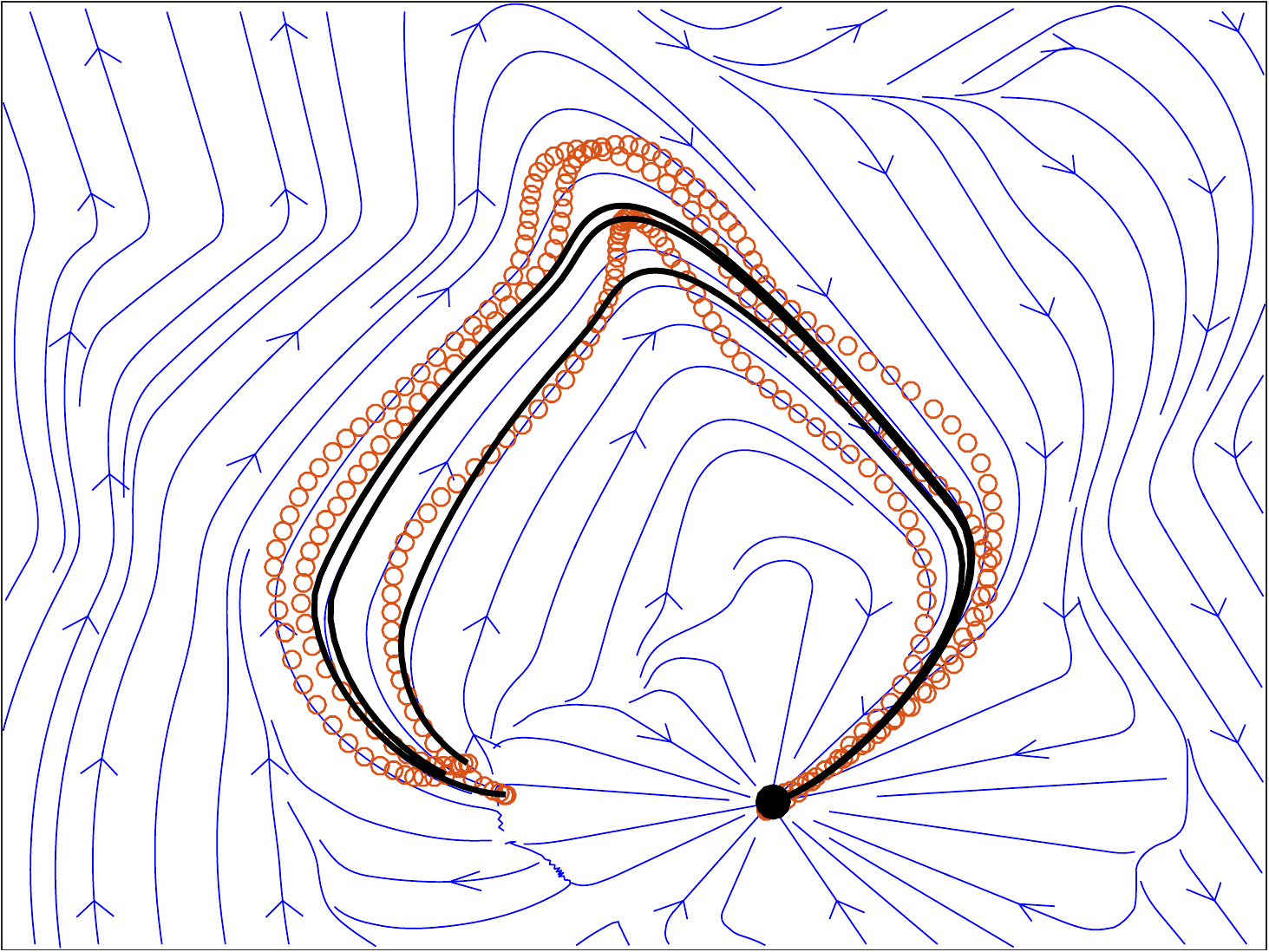}
	\includegraphics[width=0.07\textwidth]{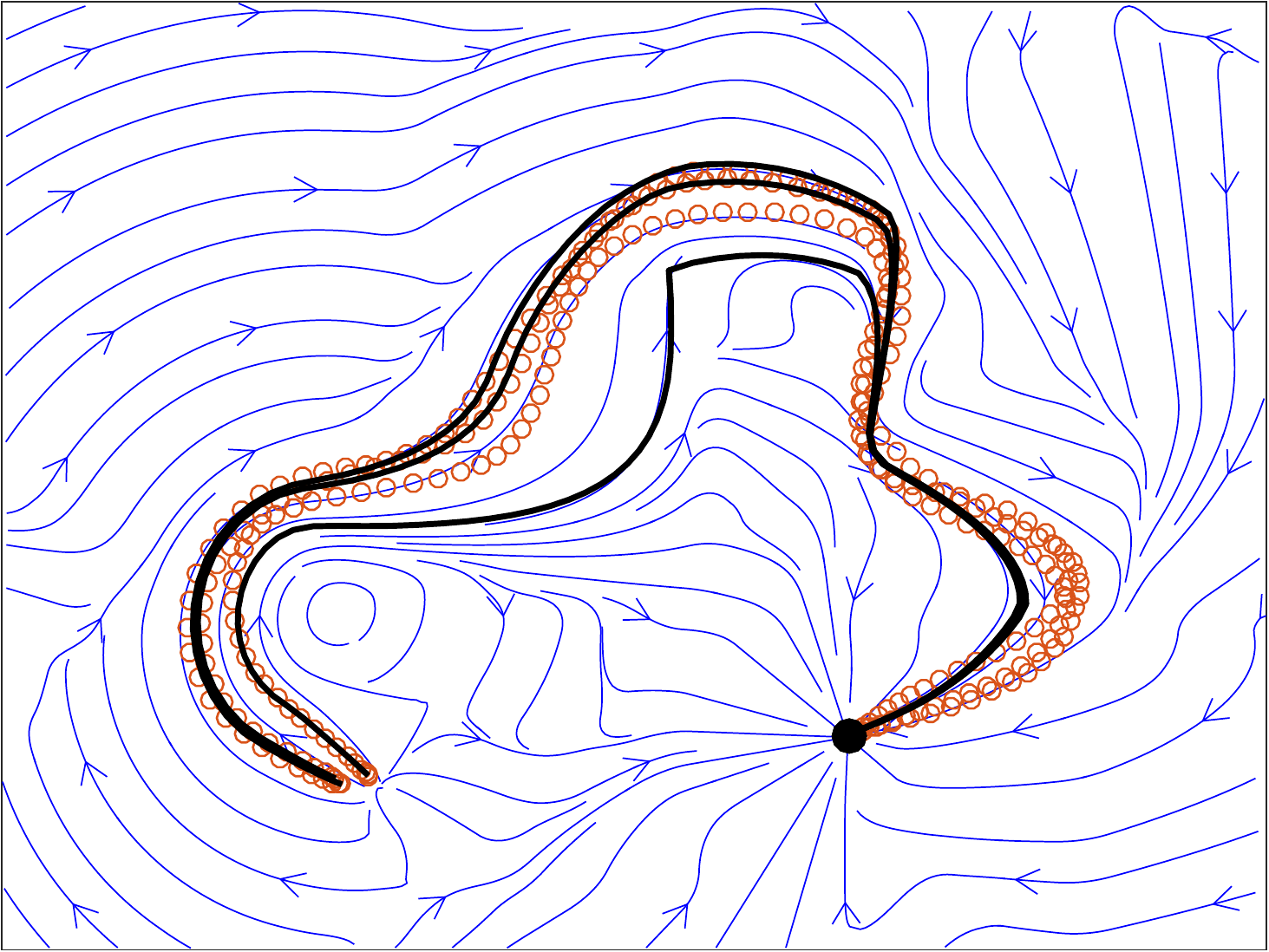}
	\includegraphics[width=0.07\textwidth]{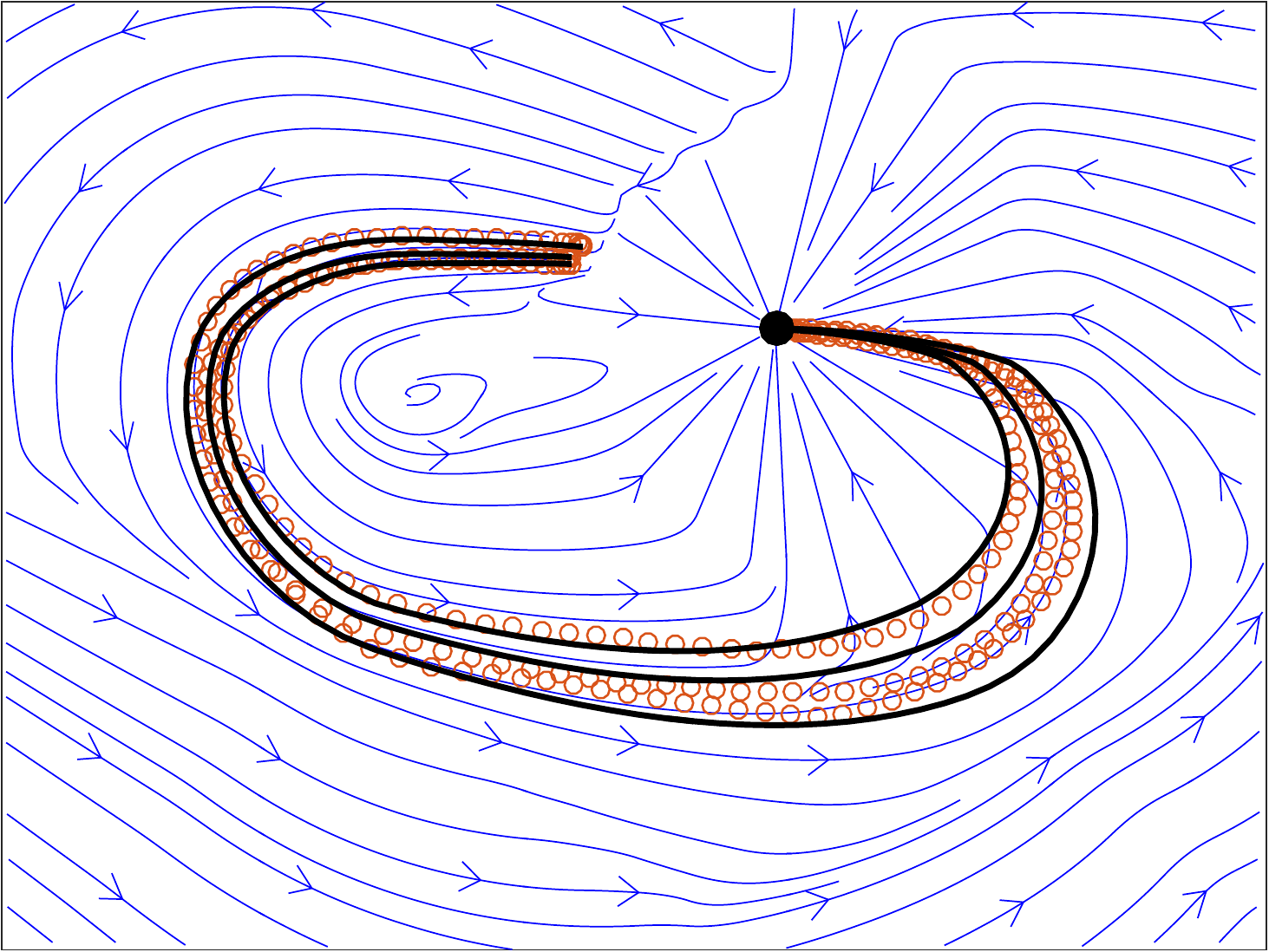}
	\includegraphics[width=0.07\textwidth]{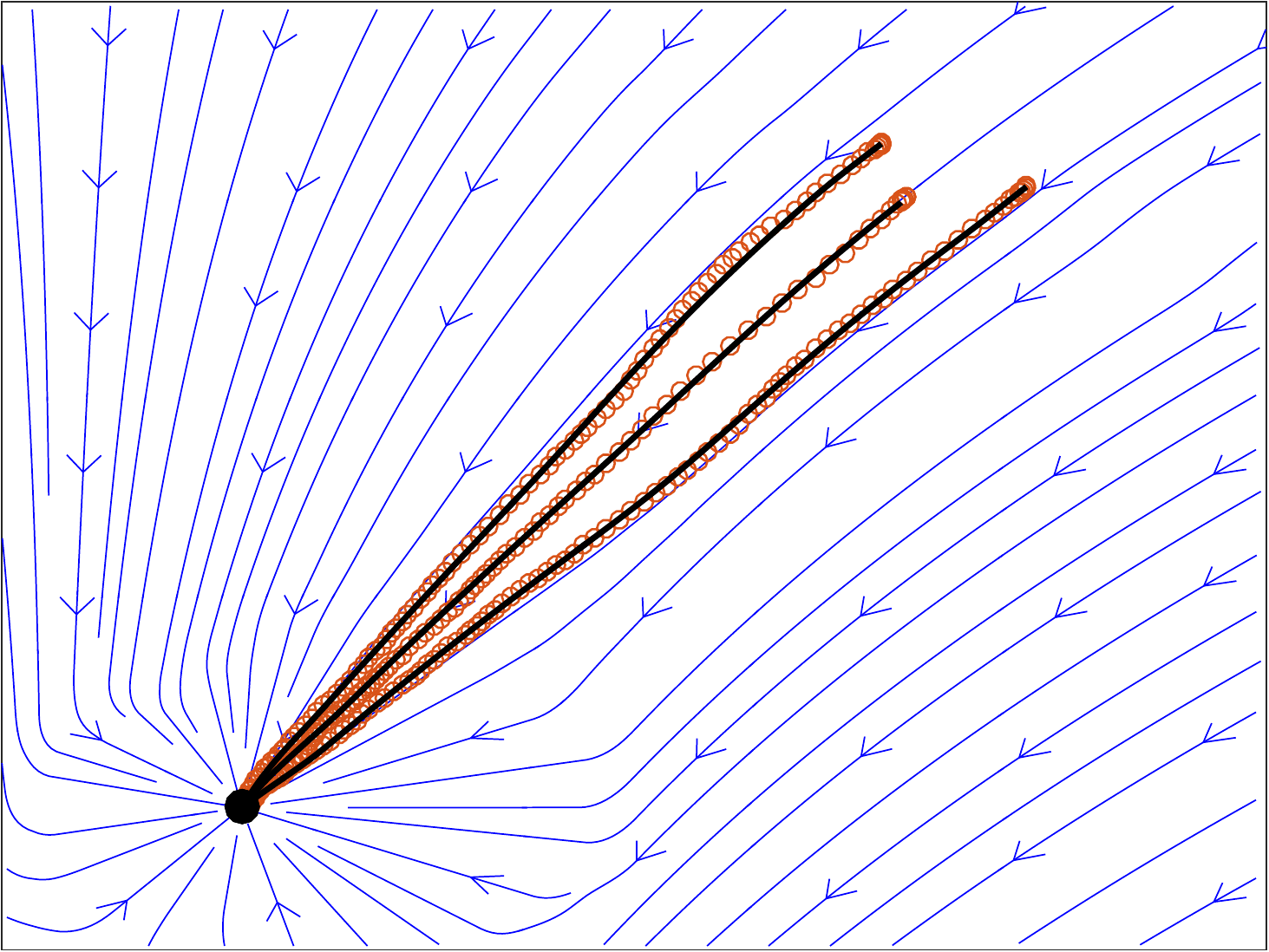}
    \caption{Qualitative results obtained with our approach on the $26$ complex motions of the LASA dataset. Brown circles represent the demonstrated positions, black solid lines the retrieved trajectories, and blue solid lines the streamlines of the dynamical system.}
    	\label{fig:lasa_qualitative}
\end{figure*} 
The results in this section show the capabilities of ESDS. First, we compare ESDS with prominent approaches in the field using the LASA Handwriting dataset \cite{LASA_dataset} as a benchmark. The dataset contains the $26$ stable 2D motions shown in Fig.~\ref{fig:lasa_qualitative}. Then, we present an experiment on a real robot.

\subsection{Accuracy and computational complexity}\label{subsec:exp_1}
The aim of this test is to quantitatively compare several approaches for motion generation with stable DS. The comparison is carried out considering two major quantities, namely the reproduction accuracy and the required training time. The accuracy is quantitatively measured using the swept error area ($SEA$) \cite{Clf} and the velocity root mean square error $V_{rmse}$ \cite{NeuralLearn2}. The $SEA$ is defined as $SEA = \sum_{d=1}^{D}\sum_{t=1}^{T-1}\mathcal{A}(\bfx_e^{t}, \bfx_e^{t+1}, \bfx_d^{t}, \bfx_d^{t+1})$, where $\bfx_{e}^{t}$ is the DS position and $\bfx_{d}^{t}$ the demonstrated position at time $t$ and demonstration $D$, $T$ is the number of samples in each demonstration, $D$ is the number of demonstrations, and $\mathcal{A}(\cdot)$ is the area of the tetragon formed by $\bfx_e^{t}$, $\bfx_e^{t+1}$, $\bfx_d^{t}$, and $\bfx_d^{t+1}$. Notice that DS trajectories are equidistantly re-sampled to contain exactly $T$ points. The $V_{rmse}$ is defined as $V_{rmse} = \sum_{d=1}^{D}\sqrt{\frac{1}{T}\sum_{t=1}^{T}\Vert \dot{\bfx}_{d}^t - \dot{\bfx}_{DS}(\bfx_{d}^t)\Vert^2}$, where $\dot{\bfx}_{DS}$ is the velocity retrieved from the learned DS. For our approach, $\dot{\bfx}_{DS}$ is defined as in \eqref{eq:paramet_ds_gamma}. 
In other words, the $SEA$ measures the error in reproducing the shape of the demonstrated motions, while the $V_{rmse}$ measures how the DS preserves the demonstrated velocities. 

Having defined the accuracy metrics, we present comparative results obtained by downsampling each demonstration in the LASA dataset to $100$ points and by considering the first three demonstrations for each motion. Qualitative results of ELDS on this dataset are depicted in Fig.~\ref{fig:lasa_qualitative}, while quantitative results are summarized in Tab.~\ref{tab:comparison_lasa}. For ESDS, the initial value of the storage is automatically determined as detailed in Sec.~\ref{subsec:storage_init}, while other quantities are defined as in Tab.~\ref{tab:smooth_functions} following the implementation in \cite{Kronander16,Smooth_functions}. In order to be consistent with state-of-the-art approaches \cite{SEDS,Blocher17,tau-SEDS}, we used GMR as regression technique in this evaluation also for ESDS and CLF-DM in \cite{Clf}. For all the approaches that uses GMR, the optimal number of Gaussian components $K^*$ is independently found for each approach and each motion by fitting a stable DS for each $K=4,\ldots,7$, computing the $SEA$ for each $K$, and choosing $K^*$ as the number of components that results in the minimum $SEA$. It is worth reminding that we cannot use GMR for the FDM approach in \cite{Perrin16} since it applies to linear DS, and that ESDS and CLF-DM can work with any regression technique (an example for ESDS is shown in Sec.~\ref{subsec:robot_experiments}). For FDM we use $10$ kernel functions and run the diffeomorphic matching algorithm for a maximum of $150$ iterations. C-GMR parameters are selected as in \cite{Blocher17}.
SEDS, CLF-DM, and $\tau$-SEDS require non-linear optimization loops. For SEDS, we use the \textit{likelihood} as cost function and run the optimization for a maximum of $500$ iterations. CLF-DM parameterizes the Lyapunov function as a weighted sum of asymmetric quadratic functions (WSAQF), whose parameters are learned via numerical optimization. We search for the optimal number of asymmetric components $L^*$ by repeating the training for $L=0,\ldots,4$ and choosing as $L^*$ the number of asymmetric components that results in the minimum $SEA$. Being significantly faster then SEDS, we let the optimization run for a maximum of $1500$ iterations. $\tau$-SEDS learns a WSAQF from demonstrations ($1500$ iterations), computes a diffeomorphism from the learned WSAQF, and applies SEDS ($500$ iterations) on the diffeomorphed data. 

Results obtained with the best set-up ($K^*$ and, when needed, $L^*$) are reported in Tab.~\ref{tab:comparison_lasa}. Looking at the results, it is clear that SEDS introduces strong deformations in the learned motions and it requires a relatively long training time. These results are in accordance with previous studies \cite{Clf, tau-SEDS,Blocher17}. FDM is extremely fast, but it introduces deformations in the learned motion. The reason is that learning from multiple demonstrations is not possible and one has to average across multiple demonstrations to get a unique trajectory. This is also the reason why the training time of FDM is almost constant across different motions. $\tau$-SEDS has a slightly better accuracy than FDM but is the slowest among the considered approaches. The long training time is expected since  $\tau$-SEDS needs to learn both a diffeomorphism (via CLF-DM) and a stable DS (via SEDS). Notice that the accuracy of $\tau$-SEDS can be improved by increasing the number of iterations in the SEDS algorithm. However, our implementation takes more than $3$ hours to find $K^*$ and $L^*$ for all the considered motions and generate the results. Therefore, we decided to limit the number of iterations to $500$. C-GMR performs well in terms of accuracy and training time. The problem with this approach is that the shape of the motion is significantly deformed outside a region of the state-space that contains the demonstrations. In other words, accuracy depends on a careful choice of the parameters that define this demonstration area. CLF-DM achieves high accuracy and it is the fastest among the approaches that exploit constrained optimization, i.e. SEDS and $\tau$-SEDS. The proposed ESDS does not introduce an additional training time or significant motion deviations. Therefore, with GMR regression, our approach has a training time comparable to C-GMR and FDM, it has the best $SEA$ value and a $V_{rmse}$ close to C-GMR and CLF-DM. These results show that ESDS is a good compromise between accuracy and training time. 


\begin{table}[t]
    \centering
    \caption{Reproduction error and training time of different approaches on the LASA dataset.} 
    \label{tab:comparison_lasa}
    \resizebox{\columnwidth}{!}{%
    {\renewcommand\arraystretch{1.3} 
	\begin{tabular}{ |c|c|c|c| }
	\hline
	Learning   & $SEA$ [mm\textsuperscript{2}] & $V_{rmse}$ [mm/s] & Train. Time [s]\\
	Method & (mean / range)  & (mean / range) & (mean / range)\\
	\hline
	\hline
	ESDS (Ours) &   431.5 / [26.0-1307] & 15.8 / [6.2-31.6] & 0.08 / [0.03-0.17] \\
	\hline
	SEDS \cite{SEDS} &  1022 / [34.9-2300] &  48.4 / [11.4-136.8] & 16.3 / [1.2-55.1] \\
	\hline
	CLF-DM \cite{Clf} &  460.7 / [16.6-1269] & 14.1 / [5.6-23.8] & 2.3 / [0.09-21.5] \\
	\hline
	$\tau$-SEDS \cite{tau-SEDS} & 537.0 / [26.4-1139] & 27.1 / [9.5-53.8] & 25.3 / [7.6-55.4] \\
	\hline
	C-GMR \cite{Blocher17} &  496.7 / [20.3-1840] & 14.0 / [6.2-24.1] & 0.1 / [0.03-0.28] \\
	\hline
	FDM \cite{Perrin16} &  550.5 / [42.0-1769] & 31.4 / [9.7-70.2] & 0.08 / [0.07-0.09] \\
	\hline
	\multicolumn{4}{ c }{*The source code of SEDS and CLF-DM is available on-line \cite{SEDS_Code,CLF_Code}.}\\
	\multicolumn{4}{ c }{**The author would like to thank N. Perrin and P. Schlehuber-Caissier for providing}\\
	\multicolumn{4}{ c }{ the source code of the FDM approach in \cite{Perrin16}.}
\end{tabular}
}}
\end{table}	



\subsection{Robot experiment}\label{subsec:robot_experiments}
\begin{figure}[t]
	\centering
	\subfigure[Demonstrations and retrieved trajectories.]{\includegraphics[width=\columnwidth]{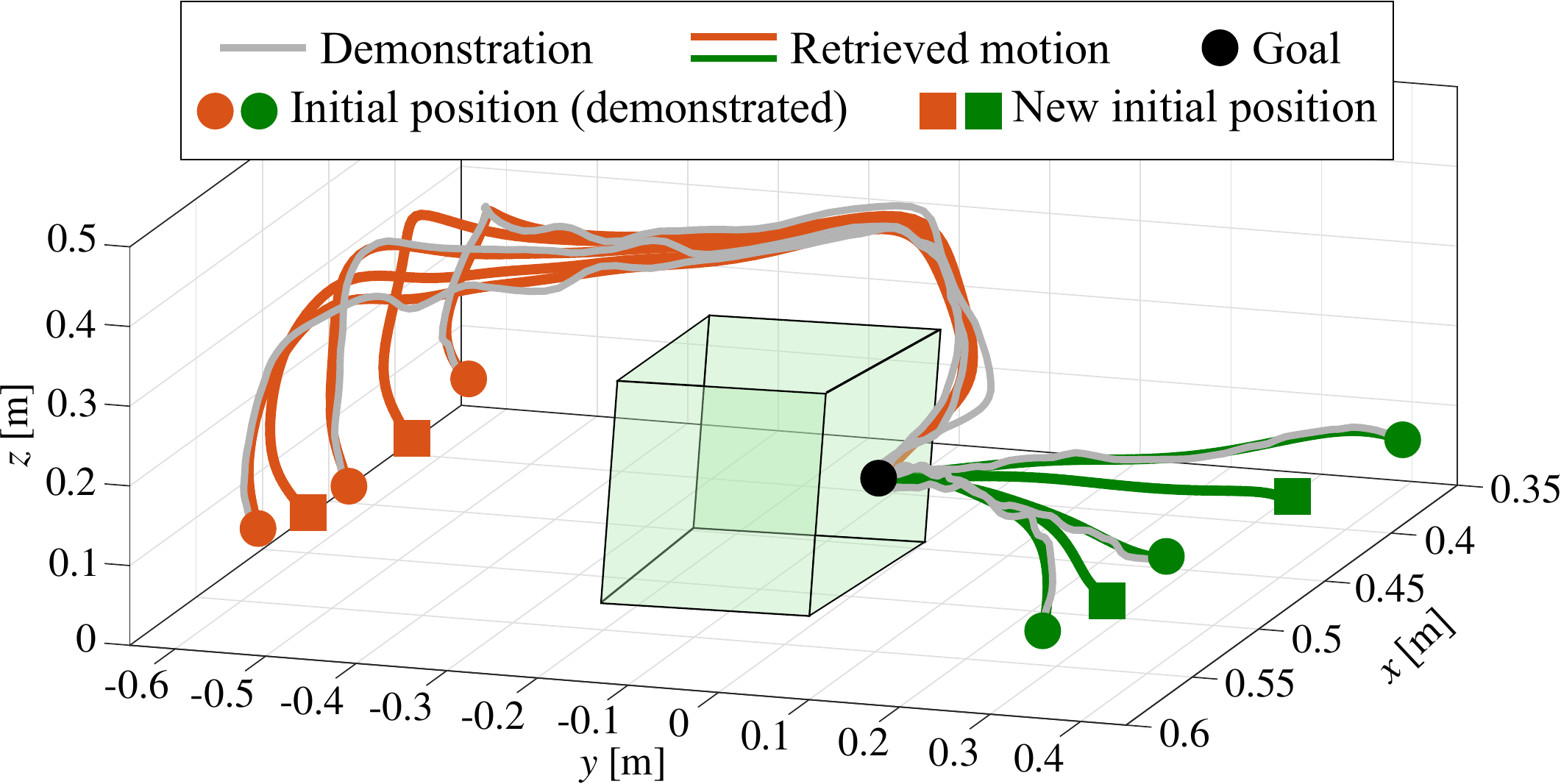}}
	\subfigure[Snapshots of the motion execution.]{\includegraphics[width=\columnwidth]{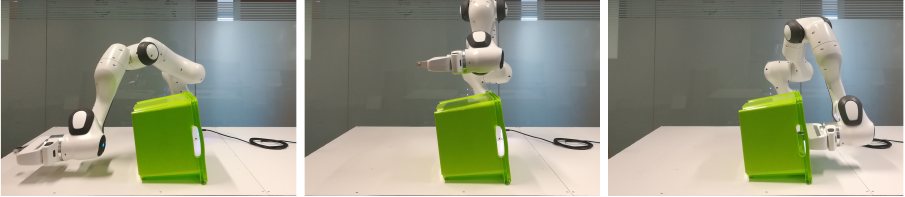}}
    	\caption{Results obtained by applying ESDS to retrieve stable motions on a real robot.}
    	\label{fig:experiment}
\end{figure}
The goal of this experiment is two-fold. First, we aim at showing that ESDS works with different regression techniques. Therefore, Gaussian process regression \cite{Rasmussen06} is used to retrieve $\bff(\bfx)$. Second, we show that a DS can encode different motions in different regions of the state-space. To this end, we provide $6$ demonstrations of a task consisting in entering into a box from different initial positions (Fig.~\ref{fig:experiment}). Demonstrations are provided by kinesthetically guiding the robot towards the task execution. In our setup, the robot's interface applies a low-pass filter with cut-off frequency of $100\,$Hz to the measured Cartesian positions (grey lines in Fig.~\ref{fig:experiment}(a)).   
Depending on the starting point, the robot has to surround the box to prevent a collision (brown lines in Fig.~\ref{fig:experiment}(a)) or it can follow an almost linear path (green lines in Fig.~\ref{fig:experiment}(a)). Both the behaviors are successfully encoded into the same DS, and ESDS is used to retrieve a stable motion converging to a unique target. As shown in Fig.~\ref{fig:experiment}(a), the robot is able to generalize the learned skill starting from initial positions that are not in the given training set. Snapshots of the surrounding motion are shown in Fig.~\ref{fig:experiment}(b). In this experiment, ESDS generates a motion trajectory by integrating the robot initial position and the robot tracks this reference trajectory using the build-in impedance controller. In principle, it is possible to combine ESDS and the approach in \cite{Kronander16} to feed the measured robot state during the execution. However, the combination may potentially affect the overall accuracy. Further investigations on this point are beyond the scope of this paper and are left as a future work. 

\section{Conclusions and Future Work}\label{sec:conclusion}
We presented the Energy-based Stabilizer of Dynamical Systems (ESDS), a novel approach for learning stable motions from human demonstrations. Inspired by energy tank-based controllers, we design a suitable Lyapunov function and stabilize the learned DS at run-time according to the virtual energy present into the system. The initial value of the energy, a parameter that significantly affects the reproduction accuracy, is also automatically estimated from training data. Comparisons on a public benchmark show that ESDS achieves high accuracy with reduced training time, while experiments on a real robot show that ESDS can stabilize a DS retrieved with different regression techniques.  

As a future work, we plan to provide more comprehensive results with experiments and comparisons in more challenging scenarios. Moreover, we plan to exploit ESDS in an incremental learning scenario where novel demonstrations of robotic skills are continuously provided at run-time. 





\bibliographystyle{IEEEtran}
\bibliography{bibliography}

\end{document}